\documentclass[11pt]{article}

\usepackage{graphicx}
\usepackage{color}
\usepackage{algorithmic,algorithm}
\usepackage{dsfont}
\usepackage{natbib}
\usepackage{amsmath,amsfonts,amsthm,amssymb,amscd}
\usepackage{fullpage}
\usepackage{hyperref}

\newcommand{\R}{\mathbb{R}}
\renewcommand{\P}{\mathbb{P}}

\newcommand{\N}{\mathbb{N}}
\newcommand{\E}{\mathbb{E}}

\newcommand{\Ups}{\Upsilon}

\DeclareMathOperator{\diag}{diag}

\DeclareMathOperator{\shape}{Shape}

\DeclareMathOperator*{\argmin}{argmin}
\DeclareMathOperator*{\argmax}{argmax}

\DeclareMathOperator*{\new}{new}

\newcommand{\aic}{\text{aic}}
\newcommand{\bic}{\text{bic}}

\newcommand{\1}{{\rm 1}\kern-0.24em{\rm I}}

\newcommand{\bX}{\mathbf X}
\newcommand{\bbX}{\mathbb X}
\newcommand{\bbZ}{\mathbb Z}

\newcommand \cC{{\cal C}}
\newcommand \cD{{\cal D}}

\newcommand \cH{{\cal H}}

\newcommand \cK{{\cal K}}

\newcommand{\cP}{{\mathcal P}}

\newcommand \cX{{\cal X}}

\newcommand{\var}{\text{Var}}%
\newcommand{\varb}{\var_{\text{B}}}%
\newcommand{\norm}[1]{\|#1\|}%

\newcommand{\kl}{\mathcal K}

\newcommand{\ind}[1]{\mathbf 1_{#1}}

\newtheorem{Theorem}{Theorem}[section]
\newtheorem{Lemma}[Theorem]{Lemma}

\begin{document}

\title{Sparse Bayesian Unsupervised Learning}


\author{St\'ephane Ga{\"i}ffas \thanks{CMAP -- Ecole
    Polytechnique. Email:
    \url{stephane.gaiffas@cmap.polytechnique.fr}} \and Bertrand Michel
  \thanks{Universit\'e Pierre et Marie Curie, Paris~6. Email:
    \url{bertrand.michel@upmc.fr}}}

\date{\today}

\maketitle

\begin{abstract}
  This paper is about variable selection, clustering and estimation in
  an unsupervised high-dimensional setting. Our approach is based on
  fitting constrained Gaussian mixture models, where we learn the
  number of clusters $K$ and the set of relevant variables $S$ using a
  generalized Bayesian posterior with a sparsity inducing prior. We
  prove a sparsity oracle inequality which shows that this procedure
  selects the optimal parameters $K$ and $S$. This procedure is
  implemented using a Metropolis-Hastings algorithm, based on a
  clustering-oriented greedy proposal, which makes the convergence to
  the posterior very fast.
\end{abstract}

\section{Introduction}
\label{sec:introduction}

This paper is about variable selection, clustering and estimation for
case where we observe unlabelled i.i.d data $X_1, \ldots, X_{n}$ in
$\R^d$, denoted $X_i = (X_i^1, \ldots, X_i^d)$, with $d$ being
eventually much larger than $n$. Clustering is now an important tool
for the analysis of high-dimensional data. An example of application
is gene function discovery and cancer subtype discovery, where one
wants to construct groups of genes with their expression levels across
different conditions or across several patients tissue samples,
see~\cite{eisen1998cluster} and~\cite{golub1999molecular} for
instance.

In the high-dimensional setting, clustering becomes challenging
because of the presence of a large number of noise variables, that can
hide the cluster structure. So, one must come up with an algorithm
that, at the same time, selects relevant variables and constructs a
clustering based only on these variables. This interconnection between
variable selection and clustering makes the problem challenging, and
contrasts with supervised problems.

Among many approaches, model-based clustering becomes increasingly
popular. It benefits from a well-understood probabilistic framework,
see \cite{Dempster:77}, \cite{Banfield:93}, \cite{Celeux:95},
\cite{MR1951635}. Because of its flexibility and interpretability, the
most popular is the Gaussian Mixture Model (GMM). Indeed, GMM can be
classified into~28 models \citep{Banfield:93,Celeux:95}, where various
constraints on the covariance matrices allow to control the shapes and
orientations of the clusters. Variable selection in this context is
based on the decomposition $\mu_k = \bar \mu + m_k$ of the mean of the
$k$-th component of the GMM, where $\bar \mu$ is a global mean and
$m_k$ contains specific information about the $k$-th cluster. A
sparsity assumption on the $m_k$'s is natural in this setting: it
means that only a few variables contribute to the clustering in terms
of location. Even further, since the clustering is totally determined
by the estimation of the GMM on the relevant variables, it is natural
to assume that the $m_k$'s share the \emph{same} support in order to
get an accurate description of these variables.

There are roughly two approaches for variable selection in the context
of high-dimensional model-based clustering: the Bayesian approach, see
\cite{LiuEtAl03}, \cite{Hoff05}, \cite{Hoff06}, \cite{MR2160563},
\cite{RafteryDean2006}, \cite{MR2285077} and the penalization
approach, see \cite{pan2007penalized}, 
\cite{ZhouPanShen09}. The Bayesian approach is very flexible and
allows complex modelings of the covariance matrices for the
GMM. However, this approach is computationally demanding since it
requires heavy MCMC stochastic search on continuous parameter
sets. The penalization approach is based on penalizing the
log-likelihood by the $\ell_1$-norm of the mean vectors and inverse
covariance matrices of the GMM. It leads to a soft-thresholding
operation in the M-step of the Expectation-Maximization (EM)
algorithm, see~\cite{Dempster:77}, \cite{pan2007penalized}. By doing
so, many coordinates of the mean vectors are shrunk towards zero,
which helps, hopefully, to remove noise variables. A problem with this
thresholding (or equivalently $\ell_1$-penalization) approach is that
the resulting estimated mean vectors have no reason to share the same
supports. As mentioned above, this property is strongly suitable since
it models precisely the fact that only a few variables makes a
distinction across the clusters.
Moreover, another problem is that, as far as we know, there exists no
mathematical result about the statistical properties of these
penalized GMM methods for high-dimensional unsupervised learning, such
as upper bounds on the estimation error, or sparsity oracle
inequalities (for mixture models in the regression setting, one can
see~\cite{MR2677722} and~\cite{Meynet12}).

The motivations for this work are two-fold. First, we propose a new
approach for model-based unsupervised learning, by combining the GMM
with a learning procedure based on the PAC-Bayesian approach. The
PAC-Bayesian approach was originally developed for classification by
\cite{shawe1997pac}, \cite{mcallester1998some} and
\cite{Catoni04,Catoni07}, see also \cite{PhD04a,audibert-2009-37},
\cite{MR2483458} and \cite{Zhang06a,Zhang06b} for further
developments. This approach has proved successful for sparse
regression problems,
see~\cite{dalalyan_tsybakov07,dalalyan2008aggregation,dalalyan2009sparse},
\cite{RigTsy11ss}, \cite{MR2786484}, \cite{AlquierBiau11}. In this
work, we use prior distributions that suggest a small support $S
\subset \{ 1, \ldots, d \}$ for significant variables and a small
number $K$ of clusters. Then, we learn $S$ and $K$ using a randomized
aggregation rule based on the Gibbs posterior distribution, see
Section~\ref{sec:learning_K_and_S}. Our methodology is based on a
Metropolis-Hastings (MH) exploration algorithm that explores a
discrete (but large) set for the ``meta'' parameter $\eta = (K,
S)$. The exploration can be done in a very efficient way, thanks to
the use of a proposal which is particularly relevant for the
clustering problem, see Section~\ref{sec:implementation}. As shown in
our empirical study (see Sections~\ref{sec:implementation}
and~\ref{sec:numerical-experiments}), an order of $300$ steps in the
MH algorithm is sufficient for convergence on a large scale problem.

Second, our methodology is supported by strong theoretical guarantees:
using PAC-Bayesian tools~\cite{Zhang06a,Zhang06b,Catoni04,Catoni07},
we prove a sparsity oracle inequality for our procedure. This oracle
inequality shows that our procedure automatically selects the
parameter $\eta = (K, S)$ which is optimal in terms of estimation
error. Note that this is the first result of this kind for sparse
model-based unsupervised learning.

\section{Our procedure}

We assume from now on that we have $n$ observations. First, we split at random the whole sample $\bbX =
(X_1, \ldots, X_{n})$ into a \emph{learning} sample $\bbX_1$ and an
\emph{estimation} sample $\bbX_2$, of sizes $n_1$ and $n_2$ such that $n_1+n_2=n$. Then, the two main
ingredients of our procedure are fitting constrained Gaussian Mixture
Models (GMM) and learning the number of clusters and relevant
variables using a generalized Bayesian Posterior with a sparsity
inducing prior, which are described respectively in
Sections~\ref{sec:constrained-GMM} and~\ref{sec:learning_K_and_S}. The
main steps of our procedure are finally summarized in
Section~\ref{sec:main-steps}.

\subsection{Constrained Gaussian mixtures models}
\label{sec:constrained-GMM}

Let us denote by $\phi_{(d)}(\cdot |\mu_k, \Sigma_k)$ the density of the 
multivariate Gaussian $N_{d}(\mu_k, \Sigma_k)$ distribution. The density
of a Gaussian mixture model (GMM) with $K$ components is given by
\begin{equation*}
  f_\theta = \sum_{k=1}^K p_k \ \phi_{(d)}(\cdot |\mu_k, \Sigma_k),
\end{equation*}
where $p_1, \ldots, p_K$ are the mixture proportions, $\mu_1, \ldots,
\mu_K$ are the mean vectors and $\Sigma_1, \ldots, \Sigma_K$ are the
covariance matrices. Let $\theta = (p_1,\dots, p_K,\mu_1, \dots,
\mu_K, \Sigma_1, \dots, \Sigma_K)$ be the parameter vector of the
GMM. An interest of GMM is that once the model is fitted, one can
easily obtain a clustering using the Maximum A Posteriori (MAP) rule,
which is recalled in Section~\ref{sec:em-map} below.
As explained in Introduction, the following structure on the $\mu_k$'s
is natural:
\begin{equation*}
  \mu_k = \bar \mu + m_k,
\end{equation*}
where $\bar \mu$ is the global mean and where $m_1, \ldots, m_K$ are
sparse vectors that share the \emph{same} support. The idea behind
this structure is that we want only a few variables to have an impact
on the clustering, and even further, we want an informative variable
to be informative for \emph{all} clusters. In the following it is
assumed without loss of generality that the data have been centered
and normalized so that we can take $\bar \mu = 0$. Note that this is a
classical assumption in the model-based clustering context.





Let $\cP(A)$ denote the set of all the subsets of a finite set $A$,
and $|A|$ denote the cardinality of $A$. Let us introduce the set of
GMM configurations
\begin{equation*}
  \Ups = \N^* \times \cP(\{ 1, \ldots, d \}).
\end{equation*}
A parameter $\eta = (K, S) \in \Ups$ is a ``meta-parameter'' for
$\theta$. It fixes the number of clusters $K$ and the common support
$S$ of the vectors $m_k$. The set $S$ is thus the set of indexes of
the active variables. If $\mu \in \R^d$ and $S \subset \{ 1, \ldots,
d\}$ we define $\mu_S = (\mu_j)_{j \in S} \in \R^{|S|}$. If $\Sigma$
is a $d \times d$ matrix, we also define the $|S| \times |S|$ matrix
$\Sigma_S = (\Sigma_{j, j'})_{j,j' \in S}$. 

Let $\shape(\eta)$ denotes the shape of the GMM restricted to the
active variables, namely $\shape(\eta)$ is a particular set of
constrained $K$-vector of $|S| \times |S|$ covariance matrices. An
example is, using the notation introduced in~\cite{Celeux:95}, the
shape
\begin{equation}
  \label{eq:shape-lb}
  \begin{split}
    \shape_{LB}(\eta) = \Big\{ \left( (\Sigma_1)_S, \cdots , (\Sigma_K)_S \right)
    \; : \; (& \Sigma_1)_S = \cdots = (\Sigma_K)_S =
    \diag(\sigma_1^2,\dots, \sigma^2_{|S|}), \\
    \;\; &(\sigma_1,\dots, \sigma_{|S|}) \in (\R^+)^{|S| } \Big\},
  \end{split}
\end{equation}
which corresponds to identical and diagonal matrices with normalized
noise variables. Note that in the high-dimensional setting, a simple
structure on noise variables is suitable, since fitting a GMM with
large covariance matrices is prohibitive. The theoretical result given
in~Section~\ref{sec:SparseIneq} is valid for all possibles
shapes. Even more than that, since our approach is based on an
aggregation algorithm, one could perfectly mix several GMM fits with
different shapes (leading to an increased computational cost). From
now on, we fix a family of shapes $(\shape(\eta))_{\eta \in \Ups
}$. We also assume that active variables and non-active variables are
independent. This is only for the sake of simplicity, since we could
use more elaborated models, such as the ones
from~\cite{MaugisCeleuxMartin09}. All these assumptions lead to the
following set of parameters associated to a configuration $\eta$ and a
shape $\shape(\eta)$:
\begin{align*}
  \Theta \left(\eta,\shape(\eta) \right) = \Big\{ & (p_1,\dots,p_K,\mu_1, \dots, \mu_K,
  \Sigma_1, \dots, \Sigma_K) \; : \; (\mu_1)_{S^\complement} =
  \dots = (\mu_K)_{S^\complement} = 0, \\
  &\quad  \left( (\Sigma_1)_S, \dots , (\Sigma_K)_S \right)
  \in \shape(\eta), \quad (\Sigma_1)_{S^\complement} = \cdots =
  (\Sigma_K)_{S^\complement} = I_{d-|S^\complement|} \Big\},
\end{align*}
where $(\mu_k)_A$ is the projection of $\mu_k$ on the set of coordinates $A$, where $(\Sigma_k)_A$ is the restriction of the matrix $\Sigma_k$ on
$A\times A$ and where $I_q$ stands for the identity matrix on $\R^q$.  Then, for $\theta \in \Theta \left(\eta,\shape(\eta) \right)$, the density
$f_\theta$ can be decomposed as follows :
\begin{equation}
  \label{eq:decompftheta}
f_\theta = \phi_{(d-|S|)}\left(\cdot |0_{d-|S|}, I_{d-|S|}\right) \: \sum_{k=1}^K p_k \phi_{(S)}(\cdot | (\mu_k)_S, (\Sigma_k)_S).
\end{equation}
For instance, if $\eta = (K, S)$ and $\theta \in \Theta\left(\eta,\shape_{LB}\right)$, then $f_\theta$ is a
Gaussian mixture with $K$ components, with variables outside of $S$
that are uncorrelated and standard $N(0, 1)$ and a shape  $\shape_{LB}$.

We consider the maximum likelihood estimator of $\theta$ over the set
of constraints $\Theta(\eta)$ for the observations in the estimation
sample $\bbX_2$:
\begin{equation}
  \label{eq:theta-eta-def}
  \hat \theta\left(\eta,\shape(\eta)\right) \in \argmin_{\theta \in  \Theta \left(\eta,\shape(\eta) \right)}
  L_\theta(\bbX_2) \;\; \text{ where } \;\; L_\theta(\bbX_2) =
  \sum_{i : X_i \in \bbX_2}  - \ln f_\theta(X_i).
\end{equation}
An approximation of this estimator can be computed using the
Expectation-Maximization (EM) algorithm, see Section~\ref{sec:em-map}
for more details. In the following we assume that a shape has been fixed and then use the notation $\Theta(\eta)$ and $\hat \theta(\eta)$.

\subsection{Learning $K$ and $S$}
\label{sec:learning_K_and_S}

Now, we want to learn the number of clusters $K$ and the support $S$
based on the learning data $\bbX_1$. We use a randomized aggregation
procedure, which corresponds in this setting to a generalized Bayesian
posterior. This method relies on the choice of prior
distributions. For $K$, we have in mind to force the number of
clusters to remain small, so we simply consider the Poisson prior
\begin{equation*}
  \pi_{\text{clust}}(K) = \frac{e^{-1}}{K!} \ind{K \geq 0}.
\end{equation*}
This prior gives nice results and always recovers smoothly the correct
number of clusters in most settings. The use of another intensity
parameter (we simply take $1$ here) does not make significant
differences. This comes from the fact that the parameter $\lambda$ of
the generalized Bayesian posterior (see~\eqref{eq:gibbs-posterior}
below) already tunes the degree of sparsity over the whole learning
process.

As explained above, we want only the most significant variables to
have an impact on the final clustering. So, we consider a prior that
downweights supports with a large cardinality exponentially
fast. Namely, we consider
\begin{equation*}
  \pi_{\text{supp}}(S) = \frac{1}{\binom{d}{|S|} e^{|S|} C_d},
\end{equation*}
where $C_d = \sum_{k=0}^d e^{-k}$. This prior is used in
\cite{RigTsy11ss} for sparse regression learning. This choice is far
from being the only one, since we observed empirically that any other
prior inducing a small cardinality gives similar results. Finally, the
prior for a meta-parameter $\eta = (K, S) \in \Ups$ is
\begin{equation*}
  \pi(\eta) = \pi_{\text{clust}}(K) \times \pi_{\text{supp}}(S).
\end{equation*}
From a Bayesian point of view, this means that we assume $K$ and $S$
to be independent, which is reasonable in this setting. This choice has an impact on the risk bound we obtain, this is discussed further.
 The Gibbs
posterior distribution is now defined by
\begin{equation}
  \label{eq:gibbs-posterior}
  \hat \pi_\lambda(d\eta) = \frac{\exp\Big(- \lambda L_{\hat
      \theta(\eta)}(\bbX_1) \Big)}{\E_{\eta \sim \pi}
    \exp\Big(- \lambda L_{\hat
      \theta(\eta)}(\bbX_1) \Big)} \pi(d \eta),
\end{equation}
where $\lambda > 0$ is the \emph{temperature} parameter and where we
recall that $\hat \theta(\eta)$ is given by~\eqref{eq:theta-eta-def}
and that $L_{\theta}(\bbX_1) = \sum_{i : X_i \in \bbX_1}  \ln
f_\theta(X_i)$. Note that written in the following way:
\begin{equation}
  \label{eq:generalized-bayesian-posterior}
  \hat \pi_\lambda(d\eta) = \frac{\prod_{i : X_i \in \bbX_1} f_{\hat
      \theta(\eta)}^\lambda(X_i)}{\E_{\eta \sim \pi} \prod_{i : X_i
      \in \bbX_1} f_{\hat \theta(\eta)}^\lambda(X_i)} \pi(d \eta),
\end{equation}
the Gibbs posterior is the Bayesian posterior when $\lambda = 1$, and
it is a so-called generalized Bayesian posterior otherwise. We use a
Metropolis-Hastings algorithm to pick at random $\eta$ with
distribution $\hat \pi_\lambda$, see Section~\ref{Sec:proposal}.
The final randomized
aggregated estimator is given by
\begin{equation}
  \label{eq:aggregated-estimator}
  f_{\hat \theta(\eta)} \;\; \text{ with } \;\; \eta \sim \hat
  \pi_\lambda.
\end{equation}
The parameter $\lambda$ is a smoothing parameter that makes the
balance between goodness-of-fit on $\bbX_1$ and the Kullback-Leibler
divergence to the prior (see
Equation~\eqref{eq:characterization-posterior} below). Hence, a
careful data-driven choice for $\lambda$ is important, we observed
that AIC or BIC criteria gives satisfying results, see
Section~\ref{subsec:PostTrait}.

\subsection{Main steps}
\label{sec:main-steps}

The main steps of our procedure can be summarized as follow. We fix a
number $B$ of splits (say 20).
\begin{enumerate}
\item Repeat the following $B$ times:
  \begin{enumerate}
  \item Split the whole sample $\bbX = (X_1, \ldots, X_{n})$ at
    random into a learning sample $\bbX_1$ and an estimation sample
    $\bbX_2$, of size $n_1$ and $n_2$. We call $b$ this split in the
    following.
  \item For each $\lambda$ in a grid $\Lambda$, use the
    Metropolis-Hastings (MH) algorithm described in
    Section~\ref{Sec:proposal} below to pick at random $\eta = (K, S)$
    distributed according to $\hat \pi_\lambda$,
    see~\eqref{eq:generalized-bayesian-posterior}. Along the MH
    exploration, the likelihoods used in the generalized Bayes
    posterior are computed using $\bbX_1$ whereas, to compute
    approximations of the $\hat \theta(\eta)$'s, the EM algorithm uses
    $\bbX_2$.
  \item Select a temperature $\lambda(b)$ from the grid, see
    Section~\ref{subsec:PostTrait}.
  \item Choose a configuration $\eta(b)$ for temperature $\lambda(b)$
    and select a final meta-parameter $\hat \eta$ using the $\eta(b)$
    chosen by each split $b = 1, \ldots, B$, see
    Section~\ref{subsec:PostTrait}. Fit on $\bbX$ a final GMM with
    configuration $\hat \eta$.
  \end{enumerate}
\item Use the MAP rule to obtain a final clustering.
\end{enumerate}

Several alternatives to these steps are possible since the $B$ splits
bring a lot of information that can be analyzed in different ways. For
instance an ``aggregated clustering'' based on a proximity matrix
summarizing all the links pointed out by the $B$ clusterings can be
easily computed (see Section \ref{subsec:PostTrait}).

\section{Main results}
\label{sec:SparseIneq}

If $f$ and $g$ are probability density functions, we denote
respectively by $\cH(f, g)$ and $\cK(f, g)$ the Hellinger and Kullback
divergences between $f$ and $g$. We denote by $\E_{\bbX_1}$ and  $\E_{\bbX_2 }$ the
expectations with respect to the learning sample $\bbX_1$ and $\bbX_2$. If the samples
$\bbX_1$  and $\bbX_2$ are i.i.d with a common density $f^*$, a measure of statistical
risk for~\eqref{eq:aggregated-estimator} is simply given by
$\E_{\bbX_1} \E_{\eta \sim \hat \pi_\lambda} \cH^2(f^*, f_{\hat
  \theta(\eta)})$. The next Theorem is a sparsity oracle inequality
for the aggregated estimator~\eqref{eq:aggregated-estimator}.
\begin{Theorem}
  \label{thm:main-oracle-inequality}
  Let $\lambda \in (0, 1)$ and consider the randomized aggregated
  estimator~\eqref{eq:aggregated-estimator}, where we recall that
  $\hat \theta(K, S)$ is a constrained maximum likelihood
  estimator~\eqref{eq:theta-eta-def} for a fixed shape. Conditionally to $\bbX_2$ we have
  \begin{eqnarray*}
    \E_{\bbX_1}  \cH^2(f^*, \E_{\eta \sim \hat
\pi_\lambda} f_{\hat \theta(\eta)})  
& \leq & \E_{\bbX_1} \E_{\eta \sim \hat \pi_\lambda}  \cH^2(f^*,  f_{\hat \theta(\eta)})      \\
  &  \leq &   c_\lambda \inf_{\substack{K \in \N^* \\ S   \subset \{ 1, \ldots, d \}}} 
    \bigg\{ \lambda \kl(f^*,  f_{\hat \theta(K, S)}) + \frac{ \ln K! + 1 + 2 |S|  \ln ( e  d / |S|) )}{n} \bigg\} 
  \end{eqnarray*}
and furthermore
  \begin{eqnarray}
    \E_{\bbX_1 \bbX_2}  \cH^2(f^*, \E_{\eta \sim \hat \pi_\lambda} f_{\hat \theta(\eta)})
 & \leq  &   \E_{\bbX_1 \bbX_2} \E_{\eta \sim \hat \pi_\lambda} \cH^2(f^*,      f_{\hat \theta(\eta)})    \notag  \\
 & \leq& c_\lambda  \inf_{\substack{K \in \N^* \\ S  \subset \{ 1, \ldots, d \}}} \bigg\{ \lambda  \E_{\bbX_2} \kl(f^*,  f_{\hat \theta(K, S)}) +
\frac{ \ln K! + 1 + 2 |S|  \ln ( e  d / |S|) )}{n} \bigg\}    \label{eq:borneTH1}
  \end{eqnarray}
where $c_\lambda = 2 / \min(\lambda, 1-\lambda)$.
\end{Theorem}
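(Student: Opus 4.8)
The plan is to recognize this as a standard PAC-Bayesian aggregation bound applied to a discrete model set indexed by $\eta = (K,S)$, with the ``models'' being the fitted densities $f_{\hat\theta(\eta)}$ (which, conditionally on $\bbX_2$, are fixed functions). The key technical input is the classical PAC-Bayesian / progressive mixture type inequality for Hellinger risk — the kind appearing in \cite{Catoni04,Catoni07} and \cite{Zhang06a,Zhang06b}, and used in \cite{RigTsy11ss} for sparse regression — which states that for any $\lambda \in (0,1)$, any countable family of densities $(g_j)_{j}$ and any prior $(\pi_j)_j$, the Gibbs posterior $\hat\pi_\lambda \propto \exp(-\lambda L_{g_j}(\bbX_1)) \pi_j$ satisfies
\begin{equation*}
  \E_{\bbX_1} \E_{\eta \sim \hat\pi_\lambda} \cH^2(f^*, g_\eta)
  \;\le\; \frac{1}{\min(\lambda, 1-\lambda)} \inf_{\rho} \Big\{ \lambda\, \E_{\eta \sim \rho} \kl(f^*, g_\eta) + \kl(\rho, \pi) \Big\},
\end{equation*}
where the infimum is over all probability measures $\rho \ll \pi$. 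The first inequality in each displayed chain is just convexity of $\cH^2$ in its second argument (Jensen), so the real content is the second inequality.

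First I would apply this lemma conditionally on $\bbX_2$, with $g_\eta = f_{\hat\theta(\eta)}$ and $\pi = \pi_{\text{clust}} \otimes \pi_{\text{supp}}$. Then, to turn the variational infimum over $\rho$ into the stated infimum over a single pair $(K,S)$, I would restrict to Dirac masses $\rho = \delta_{(K_0,S_0)}$, for which $\E_{\eta \sim \rho}\kl(f^*, g_\eta) = \kl(f^*, f_{\hat\theta(K_0,S_0)})$ and $\kl(\delta_{(K_0,S_0)}, \pi) = -\ln \pi(K_0,S_0) = -\ln \pi_{\text{clust}}(K_0) - \ln \pi_{\text{supp}}(S_0)$. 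Plugging in the explicit priors gives $-\ln\pi_{\text{clust}}(K_0) = \ln K_0! + 1$ and $-\ln\pi_{\text{supp}}(S_0) = \ln\binom{d}{|S_0|} + |S_0| + \ln C_d \le \ln\binom{d}{|S_0|} + |S_0|$ since $C_d \le \sum_{k\ge 0} e^{-k} < \infty$ and in fact $C_d \le e/(e-1)$, but more to the point $\ln C_d \le 0$ is false — one checks $C_d \in (e^{-0}, \ldots)$ so $\ln C_d$ is between $0$ and $\ln(e/(e-1)) < 1$; absorbing it, and using the standard bound $\ln\binom{d}{s} \le s\ln(ed/s)$, the penalty term is at most $\ln K_0! + 1 + 2|S_0|\ln(ed/|S_0|)$ after bounding $|S_0| + \ln\binom{d}{|S_0|} + \ln C_d \le 2|S_0|\ln(ed/|S_0|)$ (valid since $|S_0| \le |S_0|\ln(ed/|S_0|)$ and the $\ln C_d$ term is small — I would just verify the constants make this clean, possibly this is why the factor $2$ and the ``$+1$'' are there). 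Since $n_1 \le n$ and the Gibbs posterior uses $L$ summed over $\bbX_1$ of size $n_1$, the natural normalization produces $1/n_1 \ge 1/n$; I would check whether the lemma as invoked gives $1/n_1$ or $1/n$ and note the stated bound with $1/n$ follows because $n_1 \le n$ — actually one wants the complexity divided by $n_1$ to be \emph{upper} bounded, so I need $1/n_1 \le$ something, which fails; more likely the PAC-Bayes lemma here is stated per-observation so the right-hand side already carries $1/n_1$ and the paper writes $n$ loosely, or $\kl(f^*,\cdot)$ is an averaged (per-observation) divergence. I would make this normalization precise at the outset. Finally, taking the infimum over $(K_0, S_0) \in \N^* \times \cP(\{1,\ldots,d\})$ yields the first chain; taking $\E_{\bbX_2}$ of both sides and using Fubini (the left side's $\E_{\bbX_1}$ and the newly-added $\E_{\bbX_2}$ commute, and $\E_{\bbX_2}\inf \le \inf \E_{\bbX_2}$ only goes the wrong way — so instead I bound, for the fixed minimizing pair, $\E_{\bbX_2}$ of the conditional bound, then take inf outside) gives \eqref{eq:borneTH1}.

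The main obstacle, and the step deserving the most care, is pinning down the exact form of the PAC-Bayesian inequality being invoked and verifying that its hypotheses hold here — in particular, that $f^*$ and the $f_{\hat\theta(\eta)}$ are genuine densities on $\R^d$ (so $\cH^2$ and $\kl$ are well-defined), and that the conditioning on $\bbX_2$ legitimately makes $f_{\hat\theta(\eta)}$ a deterministic family so that the data-dependence in $\hat\pi_\lambda$ enters only through $\bbX_1$. One subtlety: $\hat\theta(\eta)$ is defined only up to the argmin in \eqref{eq:theta-eta-def}, so I would fix one measurable selection. Another: the family is indexed by an infinite set ($K \in \N^*$), so I need the version of the bound valid for countable priors, which the Poisson prior makes summable. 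The rest — plugging in priors, the $\ln\binom{d}{s} \le s\ln(ed/s)$ estimate, and the Jensen step for the left inequalities — is routine, and I would not belabor it.
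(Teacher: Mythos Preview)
Your approach is essentially the same as the paper's: invoke the PAC-Bayesian oracle inequality (the paper derives it from scratch via the convex duality identity $\kl(\hat\pi,\pi_g) = \kl(\hat\pi,\pi) + \ln \E_\pi e^{g} - \E_{\hat\pi} g$, the link $\lambda(1-\lambda)\cD_\lambda \le -\ln \E e^{-\ell}$ from $1-x\le -\ln x$, and the comparison $\tfrac12\cH^2 \le \max(\lambda,1-\lambda)\cD_\lambda$, rather than citing it as a black box), then restrict the infimum to Dirac masses and plug in the explicit priors. Your concerns about measurable selection, countability, and the $n$ versus $n_1$ normalization are legitimate bookkeeping points---the paper in Section~6.1 silently reuses $n$ for the size of $\bbX_1$---but none of them is a gap in the argument.
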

The proof of Theorem~\ref{thm:main-oracle-inequality} is given in
Section~\ref{sec:proof-thm1}
below. Theorem~\ref{thm:main-oracle-inequality} entails that the
aggregated estimator~\eqref{eq:aggregated-estimator} has a estimation
error close to the one of the GMM with the best number of clusters $K$
and the best support $S$. Hence, it proves that the generalized
Bayesian posterior~\eqref{eq:generalized-bayesian-posterior}
automatically selects the correct $K$ and $S$ in terms of estimation
error. Note that the residual term is of order $|S|  \ln(d / |S|) /
n$, which coincides with the optimal residual term for the
model-selection aggregation problem in supervised settings, see
\cite{Tsy2003}, \cite{BTW07}, \cite{LecBer}. Broadly, this computational cost is $
|S| \ln |S| + K \ln d $, this additive structure additive in $|S|$
and $K$ is a direct consequence of the assumption of independence between $S$ and $K$ we make in the prior. The use of the squared Hellinger
distance on the left hand side and of the Kullback divergence on the
right hand side is a standard technical problem for the proof of
oracle inequalities for density estimation, see for
instance~\cite{van_de_geer00}, \cite{Zhang06b}, and
\cite{massart03}. The restriction on $\lambda$ is for the sake of
simplicity, since more general (but sub-optimal) bounds for $\lambda >
1$ can be established, see \cite{Zhang06b}, and requires extra
technicalities that are beyond the scope of this paper.

The upper bound given in Theorem~\ref{thm:main-oracle-inequality} can be made more explicit by computing the Kullback Leibler risk of the
MLE in a family of GMMs of fixed shape, we investigate this question in following of this section.
 It is known since the works are \cite{Akaike73} (see also Section 2.4 in
\cite{Linhart86}) that for well
specified parametric models in $\R^p$, the $\kl$ risk of the MLE is of the order of $\frac p n$. However, the model has no reason to be well
specified in our context and moreover we need a more precise bound than these asymptotic results. It is also well known (see for instance
\cite{Birge83}) that rates of convergence for estimators should be related to the metric structure with $\cH$. Indeed, rates of convergence of the MLE
are usually given in term of Hellinger risk. In the context of GMMs, rates of convergence of MLE for the Hellinger risk have first been investigated
in \cite{Genovese:00} and
\cite{Ghosal:01}. To compute such rates of convergence for our models, we need to bound the parameters of the sets $ \Theta(\eta)$. We assume as
before that the GMM
shape is fixed.  For a configuration $\eta$ and some positive constants  $\bar{\mu} $,  $\underline{\sigma} < 1 < \bar{\sigma}$, $\underline{L}   < 
\bar{L}$  we consider restricted sets of parameters with the following constraints:
$$
\Theta_r(\eta)  = \Big\{ \theta \in  \Theta(\eta) \, : \,   \forall k \in  \{1 \dots K \}  \; , \;  |\mu_k|   \leq    \bar{\mu}    \, , \,  
\mbox{sp}(\Sigma_k) \subset [ \underline{\sigma}^2 \, , \, \bar{\sigma }^2 ]^d  \, , \, \underline{L}   \leq    (2\pi)^{d/2} |\Sigma_k | \leq \bar{L}
         \Big\}
$$
where $\mbox{sp}(\Sigma)$  is the spectrum of $\Sigma$. Let $\mathcal F _\eta$ be the set of GMM densities $f_\theta$ for $\theta \in \Theta_r(\eta)
$. Let $\hat{\theta}_r(\eta)$ denotes the MLE of $\theta$ in the restricted set $\Theta_c(\eta)$. We then consider the randomized restricted
estimator 
$$f_{\hat \theta_r(\eta)} \quad \textrm{ with  } \quad \eta \sim \hat \pi _{\lambda}.$$
We also define 
$$\kl(f^*,\mathcal F _\eta) := \inf _{\theta \in \Theta_r(\eta)} \kl(f^*, f_{\theta}) $$
and  the number of free parameters $D(\eta)$ of the Gaussian mixture model parametrized by $ \Theta_r(\eta)$. Note that this number of free parameters
depends of the chosen shape. For instance, if the shape is $\shape_{LB}$ then $D(\eta) =  (K+1) |S| -1$.

In order to upper bound the $\kl$-risk, we use the following standard result: for any measure $P$ and $Q$ defined on the same
probability space (see Section 7.6 in \cite{massart03} for instance),
 \begin{equation} \label{eq:klHelling}
  2 \cH^2 (P,Q)  \leq \kl(P,Q) \leq  2 \left( 1 + \ln \left\| \frac {d P }{d Q }  \right\|_\infty   \right)  \cH^2(P,Q) .
 \end{equation}
To avoid "unbounded problem"  we
will assume here that $f^\star$ is bounded and has a compact support. Under these hypotheses it is then possible to upper bound the $\kl$-risk of the
MLE on the spaces defined $\Theta_r(\eta)$.

\begin{Theorem} \label{theo:BorneSupAvecMLE}
Assume that $\bbX_1$ and $\bbX_2$ are both i.i.d.  with a common density $f^*$ such that $\| f^\star \| _\infty < \infty$ and such that the
support of $f ^\star < \infty $ is included in $B(0,\bar \mu)$.  There exists an absolute constant $\kappa$ such that for any $\lambda \in (0, 1)$,
\begin{multline*}
    \E_{\bbX_1 \bbX_2} \cH^2(f^*,  \E_{\eta \sim \hat \pi_\lambda}    f_{\hat \theta(\eta)}) \leq  \\
    c_\lambda  \inf_{\substack{K \in \N^* \\ S   \subset \{ 1, \ldots, d \}}}  
\bigg\{  
  \lambda C 
   \left[ 
    \kl (f ^\star, \mathcal F_\eta) + \kappa    \frac{D(\eta)}{n_2}   \left\{ \mathcal{A} ^2 +  \ln^+ \left( \frac {n_2} {\mathcal{A} ^2  
D(\eta)} \right) \right\} + \frac \kappa {n_2}  \right]      
  + \frac{ \ln K! + 1 + 2 |S|  \ln ( e  d / |S|) )}{n_1} 
\bigg\}
\end{multline*}
where 
$C  =   \frac {8   }  {2 \ln 2  -   1}   \left( 1 + \ln ( \|f^\star\|_\infty   L ^ +)    +   \frac {2 {\bar \mu}^2} {\overline{\sigma} ^2 } 
\right)$ and where the constant $\mathcal{A}$ only depends on the GMM shape and the bounding parameters.
\end{Theorem}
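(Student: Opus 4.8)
The plan is to derive this from the oracle inequality~\eqref{eq:borneTH1} of Theorem~\ref{thm:main-oracle-inequality}, applied with the constrained MLE $\hat\theta_r(\eta)$ over $\Theta_r(\eta)$ in place of $\hat\theta(\eta)$: the proof of Theorem~\ref{thm:main-oracle-inequality} only uses that this estimator is a measurable function of $\bbX_2$ that is independent of $\bbX_1$, which remains true here. After taking the expectation over $\bbX_2$, it then suffices to bound, for each fixed $\eta = (K, S)$, the quantity $\E_{\bbX_2}\kl(f^*, f_{\hat\theta_r(\eta)})$ by the bracketed expression $C\bigl[\kl(f^*, \mathcal{F}_\eta) + \kappa\, D(\eta)/n_2\,\{\cdots\} + \kappa/n_2\bigr]$. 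The first step is to pass from $\kl$ to $\cH^2$ using the right-hand inequality in~\eqref{eq:klHelling}: since $f^*$ is supported in $B(0,\bar\mu)$ and, for every $\theta \in \Theta_r(\eta)$, the density $f_\theta$ is bounded away from $0$ on $B(0,\bar\mu)$ (each component has mean of norm $\le \bar\mu$, spectrum of $\Sigma_k$ in $[\underline\sigma^2, \bar\sigma^2]$ and normalizing constant controlled by $\underline L, \bar L$, and one uses the product decomposition~\eqref{eq:decompftheta} on the noise coordinates), one obtains a \emph{deterministic} bound on $\|f^*/f_\theta\|_\infty$, uniform over $\theta \in \Theta_r(\eta)$; this is exactly what produces the constant $C$ in the statement. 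This yields $\E_{\bbX_2}\kl(f^*, f_{\hat\theta_r(\eta)}) \le C\,\E_{\bbX_2}\cH^2(f^*, f_{\hat\theta_r(\eta)})$ and reduces matters to controlling the \emph{Hellinger} risk of the constrained MLE.

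For the Hellinger risk I would invoke a standard risk bound for maximum likelihood estimation over a class with controlled bracketing entropy, in its misspecified version --- for instance the results of Section~7 in~\cite{massart03} (see also~\cite{van_de_geer00}, \cite{Ghosal:01}): if $\ln N_{[]}(\eps, \mathcal{F}_\eta, \cH) \le \psi_\eta(\eps)$ and $\delta_{n_2}$ solves (up to a numerical factor) $\sqrt{n_2}\,\delta^2 \asymp \int_0^\delta \sqrt{\psi_\eta(\eps)}\,d\eps$, then $\E_{\bbX_2}\cH^2(f^*, f_{\hat\theta_r(\eta)}) \lesssim \kl(f^*, \mathcal{F}_\eta) + \delta_{n_2}^2 + 1/n_2$, the approximation term being precisely $\kl(f^*, \mathcal{F}_\eta) = \inf_{\theta \in \Theta_r(\eta)}\kl(f^*, f_\theta)$ as defined above. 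Everything then reduces to a bracketing-entropy estimate for $\mathcal{F}_\eta$ and to solving the resulting fixed-point equation.

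For the entropy bound: $\mathcal{F}_\eta$ is the image of the compact parameter set $\Theta_r(\eta)$, of Euclidean dimension $D(\eta)$, under the map $\theta \mapsto f_\theta$, and on $\Theta_r(\eta)$ this map is Lipschitz for the Hellinger (or $L^\infty$) metric with a constant depending only on the shape and on $(\bar\mu, \underline\sigma, \bar\sigma, \underline L, \bar L)$ --- the Gaussian density and its derivatives with respect to $(p_k, \mu_k, \Sigma_k)$ are uniformly bounded under these constraints, and by~\eqref{eq:decompftheta} only the $|S|$-dimensional block genuinely varies. Covering $\Theta_r(\eta)$ by Euclidean balls then gives $\ln N_{[]}(\eps, \mathcal{F}_\eta, \cH) \le D(\eta)\ln(\mathcal{A}/\eps)$ for a suitable constant $\mathcal{A}$ depending only on the shape and the bounding parameters. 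Plugging this in, $\int_0^\delta \sqrt{\psi_\eta(\eps)}\,d\eps \asymp \delta\sqrt{D(\eta)\ln(\mathcal{A}/\delta)}$, and solving $\sqrt{n_2}\,\delta^2 \asymp \delta\sqrt{D(\eta)\ln(\mathcal{A}/\delta)}$ yields $\delta_{n_2}^2 \asymp \frac{D(\eta)}{n_2}\bigl(\mathcal{A}^2 + \ln^+\!\frac{n_2}{\mathcal{A}^2 D(\eta)}\bigr)$, which is the complexity term appearing in the statement. Combining the three steps and substituting into~\eqref{eq:borneTH1} gives the claimed bound.

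I expect the main obstacle to be the bracketing-entropy estimate for $\mathcal{F}_\eta$ together with the uniform lower bound on $f_\theta$ over $\Theta_r(\eta)$: both require tracking carefully how the constants depend on the constraints (in particular on $\underline\sigma$ and $\underline L$, which control how degenerate the Gaussian components may become), and since the MLE is random these bounds must hold uniformly over all of $\Theta_r(\eta)$, not merely at the true parameter. Handling possible misspecification ($f^* \notin \mathcal{F}_\eta$) is precisely the reason one must use the version of the MLE risk theorem carrying the explicit approximation term $\kl(f^*, \mathcal{F}_\eta)$, rather than the classical asymptotic $D(\eta)/n_2$ rate of~\cite{Akaike73}.
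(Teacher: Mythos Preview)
Your proposal is correct and follows essentially the same route as the paper: start from the oracle inequality of Theorem~\ref{thm:main-oracle-inequality}, convert $\kl$ to $\cH^2$ via~\eqref{eq:klHelling} together with a uniform bound on $\|f^*/f_\theta\|_\infty$ over $\Theta_r(\eta)$ (the paper isolates this as a separate lemma), control the Hellinger risk of the constrained MLE using the misspecified MLE bound from~\cite{massart03} (the paper states and integrates a tail version of it), and feed in a bracketing-entropy estimate of the form $D(\eta)\ln(1/\eps)$ to solve the fixed-point equation. The only cosmetic difference is that the paper imports the entropy bound for $\mathcal{F}_\eta$ from~\cite{CohenLepennec11} rather than deriving it from a Lipschitz argument as you outline.
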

Of course, Theorem~\ref{theo:BorneSupAvecMLE} is only meaningful if the true distribution $f^*$ can be arbitrarily well-approximated in the
$\kl$ divergence sense by the Gaussian mixtures of the model collection. Roughly, the model dimension $D(\eta)$ is of the order of $ K |S|$  or  $ K
|S|^2$ according to the chose shape. The upper bound given by this result is thus of the order of 
$$ \inf_{\substack{K \in \N^* \\ S    \subset \{ 1, \ldots, d \}}} 
       \bigg\{ \kl (  f^\star,\mathcal F _\eta) +  \frac{D( \eta)}{n } \ln \frac {n }{D(\eta)} \bigg\} $$ 
if we take for instance $n_1 = n_2$. A similar bound has been found in the same framework by \cite{MaugisMichel11a} for an $l_0$-penalization
procedure in the spirit of \cite{massart03}. Moreover, using recent results of \cite{KRV} about the approximation of log-Holder densities using
univariate Gaussian mixtures models, \cite{MaugisMichel2012} has shown the optimality of such risk bound, in the minimax sense. 

Note that the assumptions on the true density $f^\star$  could be probably relaxed. However these assumptions are not too strong for the clustering
framework of this paper. Finally, it is not possible to give one simple expression for the constant $\mathcal A$ since it depends on the shape
chosen. The interested reader is referred to Lemma~\ref{lem:EntropyGMM} in the Appendix and references therein.

\section{Implementation}
\label{sec:implementation}

This section details the whole implementation of our method. We
illustrate the main steps on the a simulated example presented
further.
 
\subsection{The EM algorithm and the MAP rule}
\label{sec:em-map}

An approximation of the maximum likelihood estimator of GMMs can be
computed thanks to the EM algorithm \citep{Dempster:77}. In this
section we briefly present the principle of the algorithm for our
context. Assuming that a split $(\bbX_1, \bbX_2)$ has been chosen,
only the estimation sample $\bbX_2$ can be used to compute the
estimators. Let $\hat \theta(\eta)$ be the maximum likelihood
estimator of the GMM with configuration $\eta$, see
Equation~\eqref{eq:theta-eta-def}. Let $Z_i$ be the (unknown) random
vector giving the cluster of the observation $i$:
\begin{equation*}
  Z_{i,k} =
  \begin{cases}
    \; 1  &  \text{if the observation i is in cluster } k, \\
    \; 0 & \text{otherwise.}
  \end{cases}
\end{equation*}
The complete log-likelihood of the observations $(\bbX , \bbZ)$ is
defined by
\begin{equation*}
  L_{\theta}^{(c)} (\bbX_2, \bbZ_2) =  \sum_{i : X_i \in \bbX_2}
  \sum_{k=1}^K Z_{i, k} (  \ln p_k +  \ln \phi_{\mu_k,  \Sigma_k}(X_i) ). 
\end{equation*}
The algorithm consists of maximizing the expected value of the
log-likelihood with respect to the conditional distribution of $Z$
given $X$ under a current estimate of the parameters $\theta^{(r)}$:
\begin{equation*}
  Q(\theta | \theta^{(r)}) = \E_{Z | X ,\theta^{(r)}}  \Big[
  L_\theta^{(c)} (X,Z) \Big].
\end{equation*}
More precisely, the EM algorithm iterates the two following steps:
\begin{itemize}
\item Expectation step: compute the conditional probabilities
  \begin{equation} 
    \label{eq:probcond} 
    t_{i, k}(r) := \P(Z_{ik} = 1 | X,
    \theta^{(r)} ) = \frac{ p_k^{(r)} \phi_{\mu_k^{(r)},
        \Sigma_k^{(r)}}(X_i) } { \sum_{s=1}^K p_s^{(r)}
      \phi_{\mu_s^{(r)}, \Sigma_s^{(r)}}(X_i) },
  \end{equation}
  and then compute $Q(\theta |  \theta^{(r)})$.
\item Maximization step : find $\theta^{(r+1)} \in \argmax_{\theta \in
    \Theta(\eta)} Q(\theta |  \theta^{(r)})$.
\end{itemize}
After some iterations, $\theta^{(r)}$ is a good approximation of the
maximum likelihood estimator $\hat \theta(\eta)$.

At several steps of our method, we use the \emph{Maximum A Posteriori}
(MAP) rule to construct a clustering based on a GMM fit $\hat
\theta$. The MAP rule consists of attributing each observation $i$ to
the class which maximizes the conditional probability $\P(Z_{i, k} = 1
| X_i, \hat \theta )$. According to~\eqref{eq:probcond}, it reduces to
the choice of the class $\hat k (i) $ such that
\begin{equation*}
  \hat k(i) = \argmax_{k = 1, \ldots ,K}  \Big\{ \hat p_k \phi_{\hat
    \mu_k, \hat \Sigma_k}(X_i) \Big\}. 
\end{equation*}

\subsection{Metropolis-Hastings with a particular
  proposal}
\label{Sec:proposal}

To draw $\eta$ at random according to the generalized Bayes
posterior~\eqref{eq:generalized-bayesian-posterior}, we use the
Metropolis-Hastings (MH) algorithm
which
is of standard use for Bayesian statistics, see~\cite{MR2289769} and
for PAC-Bayesian algorithms, see~\cite{Catoni04}. The MH algorithm is
typically slow, so the careful choice of a proposal is often suitable
to obtain fast but statistically pertinent exploration.

In a supervised setting, such as regression, a natural idea for
variable selection is to add iteratively the variables that are the
most correlated with the residuals coming from a previous fit. This
idea leads to the so-called greedy algorithms, which are known to be
computationally efficient in a high-dimensional setting, see for
instance~\cite{MR2387964}.

In the unsupervised setting, however, no residuals are available, so 
one must come up with another idea. What is available though in
our case is the clustering coming from a previous GMM fit. So, when
exploring the variables space, it seems natural to give a stronger
importance to the variables that best explain the previous
clustering. This importance can be measured using the
between-variance.

\subsubsection*{The between-variance}

Let $\cC = \{\cC_1, \dots, \cC _K \} $ be a clustering into $K$ groups
of the observations $\bbX$. The between-variance of this clustering is
defined by
\begin{equation*}
  \varb (\cC) = \frac 1n \sum_{k=1}^K n_k \norm{G_k - G}^2 ,
\end{equation*}
where $n_k$ is the size and $G_k = (G_{k, 1}, \cdots, G_{k, d})^\top$
is the barycenter of cluster $\cC_k$, and where $G = (G_{1} , \cdots,
G_{d})^\top$ is the barycenter of the whole sample and $\norm{\cdot}$ is
the Euclidean norm on $\R^d$. The
between-variance of the variable $j$ is
\begin{equation*}
  \varb (\cC, j) = \frac 1 n \sum_{k=1} ^ K n_k  ( G_{k,j} - G_j )^2.
\end{equation*}

\subsubsection*{A particular proposal}
 
Assume that we are at step $u$ of the MH algorithm (see
Algorithm~\ref{algo:Hasting} below), the GMM estimated at this step has
configuration $\eta=(K,S)$ and is denoted by $g_u$. The clustering of $\bX$ deduced from this configuration with the MAP rule is denoted by $\mathcal
C(\eta)$. To
propose a new configuration $\tilde \eta=(\tilde K,\tilde S)$ for the step $u+1$, we use a transition kernel
$W$ defined as follows:
\begin{equation*}
  \forall \tilde \eta \in T, \quad W\left( \eta, \tilde \eta \right) = H(K, \tilde K) \: M_{ K, \tilde K}(S,\tilde S). 
\end{equation*}
The kernel $H$ determines how the number of clusters can change along the trajectory, it is defined by 
\begin{equation*}
  H(K,\cdot)  =  H^+(1, \cdot) \ind{K =1} \:  + \:   \frac{H^-(K, \cdot) + H^+(K, \cdot)}{2} \ind{ 1 < K <  K_{\max}}  \:  +  \: H^-(K_{\max},\cdot)
\ind{K = K_{\max}}
 \end{equation*}
with
  $H^+(K,K) = H^+(K,K+1) = 1/2$ and $H^-(K,K) = H^+(K,K-1) = 1/2$. \\
The kernel $ M_{K,\tilde K}$ determines how the number of active variables can change conditionally to the moves $K \rightarrow \tilde K$, it  is
defined by as follows:\\
$\bullet$  if $\tilde K \neq K$ :
$$
  M_{K,\tilde K}(S,\cdot) =  \ind{S = \tilde S}
$$ 
 $\bullet$  if $\tilde K = K$ : 
$$
  M_{K, K}(S,\cdot) = M_K^+(\emptyset,\cdot) \ind{|S| = 0} \: +   \: \frac{M_K^-(S, \cdot) +  M_K^+(S,\cdot)}{2} \ind{ 0 < |S| < d} \: +\: 
W_{2,K}^-(\{1,...,d\}, \cdot) \ind{|S| = d}
$$
with
\begin{eqnarray*}
  M_K^+(S, S \cup \{j\}) &=& \frac{ \varb(\cC(K,S), j)}{\sum_{j \notin S } \varb(\cC(K,S), j)}  \hskip 0.5cm \textrm{ if  } j \in S \textrm{ and $0$
otherwise} \\
  M_K^-(S, S \cup \{j\})   & = & \frac{\varb^{-1}(\cC(K,S), j)}{\sum_{j \in S} \varb^{-1}(\cC(K,S), j)} \hskip 0.5cm \textrm{ if  } j \notin S
\textrm{ and $0$ otherwise.}
\end{eqnarray*}

According to the transition kernel $W$, the set of variables can only change when the number of clusters is unchanged. When $\tilde K = K$, we decide
to add or remove one
variable with probability $1/2$. When adding a variable, we pick a
variable at random outside of $S$ according to the distribution
proportional to the between-variances $\varb(\cC_{u-1}, j)$. When
removing a variable, we choose a variable at random inside of $S$
according to the distribution proportional to the inverse
between-variances $\varb^{-1}(\cC_{u-1}, j)$. We observed empirically
that this proposal helps the MH exploration to focus quickly on
interesting variables.


Algorithm~\ref{algo:Hasting} below details our MH algorithm for one
split and one given temperature $\lambda$. 
This algorithm is similar to a stochastic greedy algorithm, which
decides to add or to remove a variable according to a criterion based
on an improvement of the likelihood, and constrained by the sparsity
of $K$ and $S$.

\begin{algorithm}[h]
  \caption{Metropolis-Hastings algorithm to draw $\eta \sim \hat
    \pi_\lambda$}
  \label{algo:Hasting}
  \begin{algorithmic}
    \REQUIRE $\bbX_1$, $\bbX_2$, $K_0$, $\lambda$
    \STATE Initialize $K \leftarrow K_0$%
    \smallskip
    \IF {$n \leq d$}%
    \STATE Take $S_0 \leftarrow \{ 1 \dots d\}$%
    \ELSE%
    \STATE Find a clustering $\cC$ with the $K$-means algorithm%
    \STATE Take $S_0$ as the set of $n$ variables $j$ maximizing
    $\varb(\cC, j )$%
    \ENDIF
    \smallskip
    \STATE Fit $g_0$ on $\bbX_2$ for the configuration $(K_0,S_0)$%
    \smallskip
    \STATE Find a clustering $\mathcal C_0$ of $\bbX$ with $g_0$%
    \smallskip
    \FOR{$u = 1$ to convergence}%
    \STATE Draw $\eta_{\new} = (K_{\new},S_{\new})$ distributed as
    $W(\eta_{u-1}, \cdot)$%
    \STATE Put $r \leftarrow \exp \Big[ \lambda \big( L_{\hat
      \theta(\eta_{\new})} (\bbX_1) - L_{\hat \theta(\eta_{u-1})}
    (\bbX_1) \big) \Big] \times \frac{ \pi(\eta_{\new})} {\pi
      (\eta_{u-1})}  \times \frac{ W ( \eta_{\new}, \eta_{u-1} ) }{ W( \eta_{u-1}, \eta_{\new})}$
    \STATE Draw $U$ distributed uniformly on $[0, 1]$%
    \IF{$U > r$}%
    \STATE Put $\eta_{u} \leftarrow \eta_{new}$%
    \STATE Fit $g_u$ on $\bbX_2$ for the configuration $\eta_{u} $%
    \STATE Find a clustering $\mathcal C _u$ of $\bbX$ with $g_u $%
    \ELSE \STATE Keep $\eta_{u} \leftarrow \eta_{u-1}$, $g_u
    \leftarrow g_{u-1}$ and $\mathcal C_u \leftarrow \mathcal C
    _{u-1}$%
  \ENDIF
\ENDFOR
    \smallskip
\RETURN $\eta_u$
\end{algorithmic}
\end{algorithm}

\subsubsection*{Preliminary pruning}

Algorithm~\ref{algo:Hasting} requires to compute an EM algorithm at
each step of the Markov chain. EM algorithms can be time consuming, in
particular for GMM based on a large family of active variables. To
quickly focus on a reasonably large set of active variables, we speed
up the procedure using a rough ``preliminary pruning''. We replace
$W_2^-$ by a proposal that allows to remove several variable in one
step, until the number of active variables is smaller than a ``target
number'' fixed by the user, we used $d/3$ in our
experiments. Moreover, the number of variables to be removed is drawn
at random, and cannot be larger than half of the remaining active
variables. Once the target number of active variables is reached, we
use the proposal $W_2^-$ described above. Note that this rough
preliminary pruning may remove sometimes true variables, but,
fortunately, they can be recovered later.

\subsubsection*{An illustrative example}


We illustrate the complete procedure on a simulated GMM based on two
components in dimension~100. Only the 15 first variables are active
and the others are i.i.d. with distribution $N (0, 1)$. The mean of
the first 15 variables for each component are respectively $\mu_1 =1$
for cluster~1 and $\mu_2 =0$ for cluster~2. Both clusters have 100
observations and the whole sample has been centered and standardized.
We use only here GMM with shapes $\shape_{LB}(\eta)$ defined
by~\eqref{eq:shape-lb}.

Figure \ref{fig: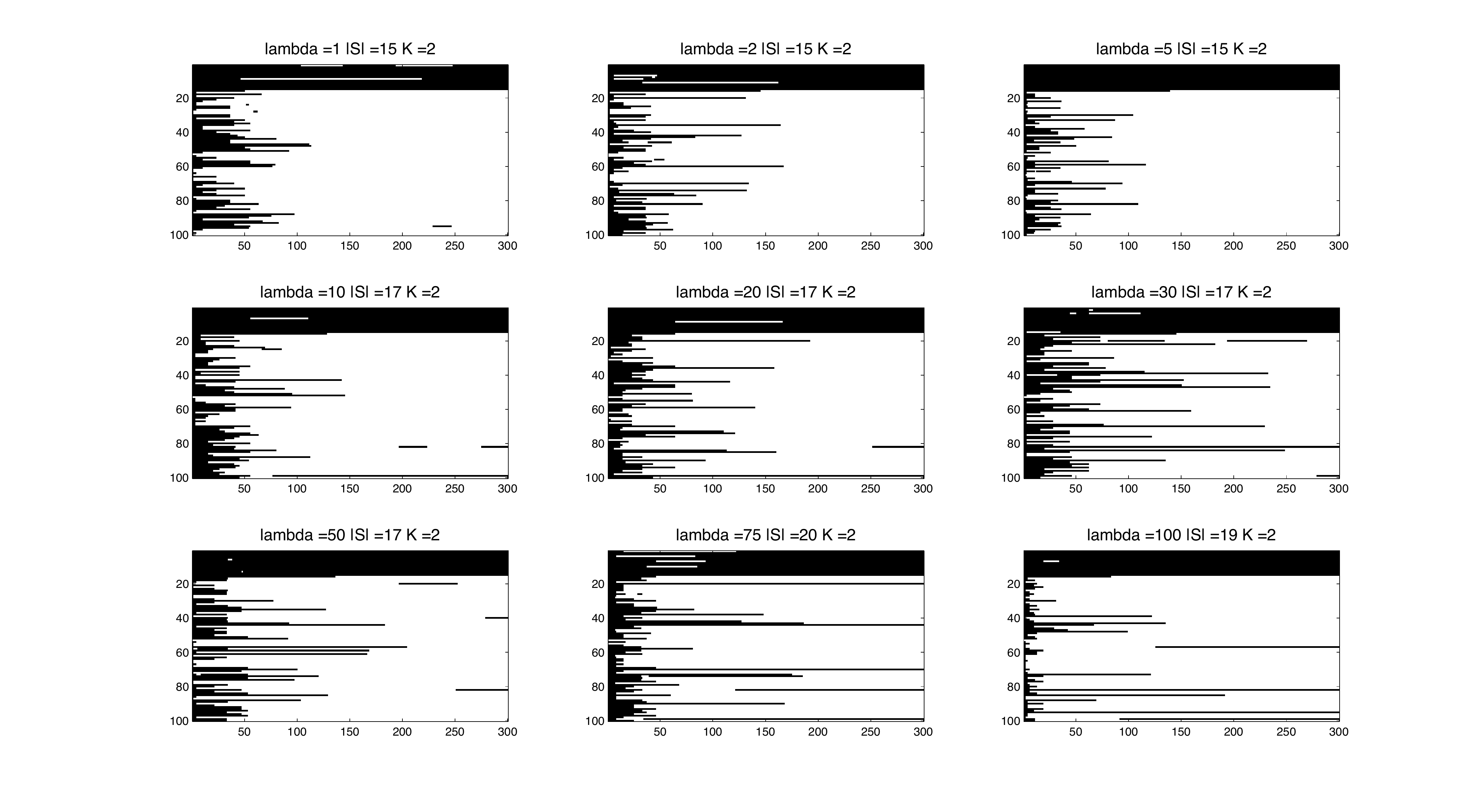} shows nine trajectories of
active variables, each trajectory being relative to a temperature taken
from the grid $\Lambda = \{ 1, 2, 5, 10, 20, 30, 50, 75, 100\}$. All the trajectories
 are based on the same split. The chains have been initialized on
the configuration $K_0 = 2$ and $S_0 = \{1, \dots, d\}$.  We used the
preliminary pruning here to quickly reduce the number of
variables. After a few dozen of steps, the number of active variables
has been dramatically reduced. Note that we use chains of length 300
only to reach convergence. There is no need to wait any longer since
the chains are already stabilized at this stage around the correct
configuration.

\begin{figure}[h]
\begin{center}
  \includegraphics[width=1\textwidth,
  angle=0]{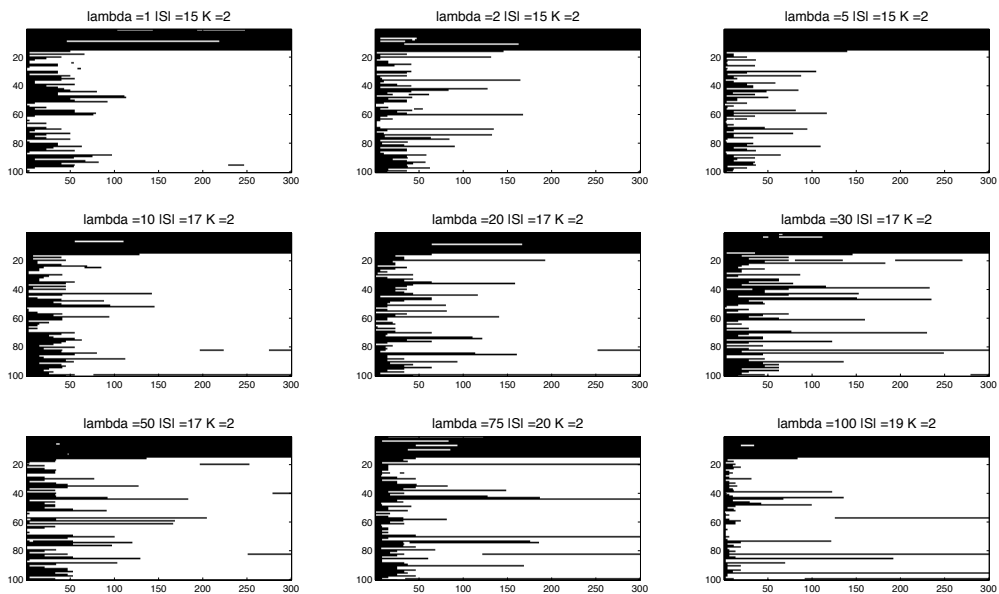}%
\end{center}
\vspace{-1cm}
\caption{MH trajectories with temperatures in $\Lambda$ and for a same
  split. These graphs show the actives variables in black. The number
  of components at the end of the chain is given in the title of each
  graph.}%
\label{fig:IllustrMethodeTraj.pdf}%
\end{figure}

\subsection{Post-treatment of the trajectories}
\label{subsec:PostTrait}

The MH algorithm presented in the previous section is proceeded for
several splits and several temperatures. Leaving aside for the moment the  temperature choice issue, we then have $B$ available
 GMM estimators of the density. This generates a large amount of information that can be used to cluster the data $\bbX$. One first idea is to
aggregate all these estimators to provide one final estimator of the density. Nevertheless, this estimator can not be easily used to produce a
relevant clustering of the data since this last is a GMM with all the composants of each GMM. We thus
need to aggregate the information provided by the splits in another
way. We propose here two  alternative methods : the first one consists in selecting one final GMM and the second method consists in aggregating the
clustering provided by the $B$ clusterings.

\subsubsection*{Selection of one configuration for each chain}  

We associate one configuration $\eta(b,\lambda)$ to each split $b$ and each
temperature $\lambda$ by choosing the most visited configuration at the end of the chain. This
choice is indeed reasonable because the chain state is very stable
when $u$ is large enough as shown in Figure~\ref{fig:IllustrMethodeTraj.pdf} (in practice we consider the
last $100$ visited configurations). Note that most of the time this configuration is also the last configuration visited by the chain.

\subsubsection*{Temperature choice}

We choose a temperature $\lambda$ for each split $b$ in the following
way. We fit a GMM for the configuration $\eta(b,\lambda)$ using
$\bbX$. Let us denote by $\hat \theta(b,\lambda)$ the parameters of
this fit. For a given split $b$, 
we choose a temperature according to
BIC or AIC criteria:
\begin{equation*}
  \lambda_{\aic}(b) = \argmin_{\lambda \in \Lambda}  - 2  L_{\hat
    \theta(b,\lambda)}(\bbX) + |\eta(b,\lambda)|
\end{equation*}
and
\begin{equation*}
  \lambda_{\bic}(b) = \argmin_{\lambda \in \Lambda}  - 2  L_{\hat
    \theta(b,\lambda)}(\bbX) +  \ln n |\eta(b,\lambda)|,
\end{equation*}
where $| \eta |$ is the number of free parameters of the GMM
associated to the configuration $\eta$.
This gives for each split $b$ one configuration $\eta(b)$. Figure
\ref{fig: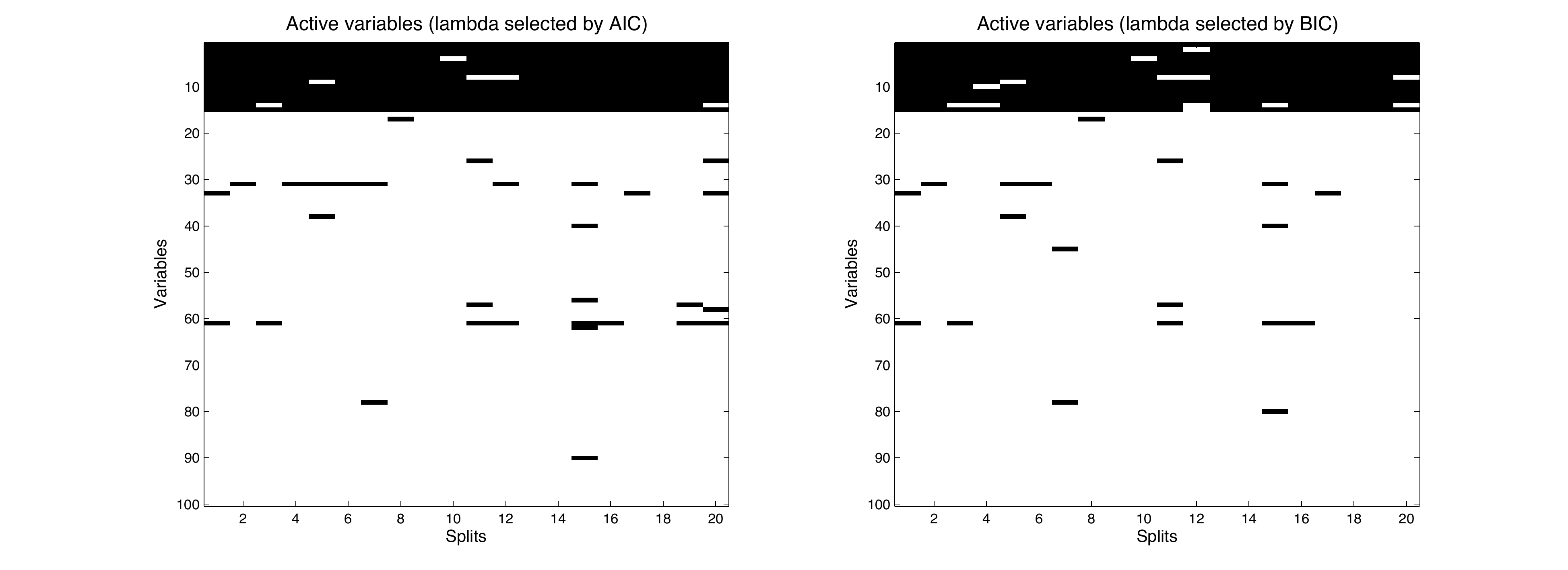} shows the supports chosen for each
split using AIC and BIC criteria.
\begin{figure}[h]
  \begin{center}
    \includegraphics[width= 1\textwidth,
    angle=0]{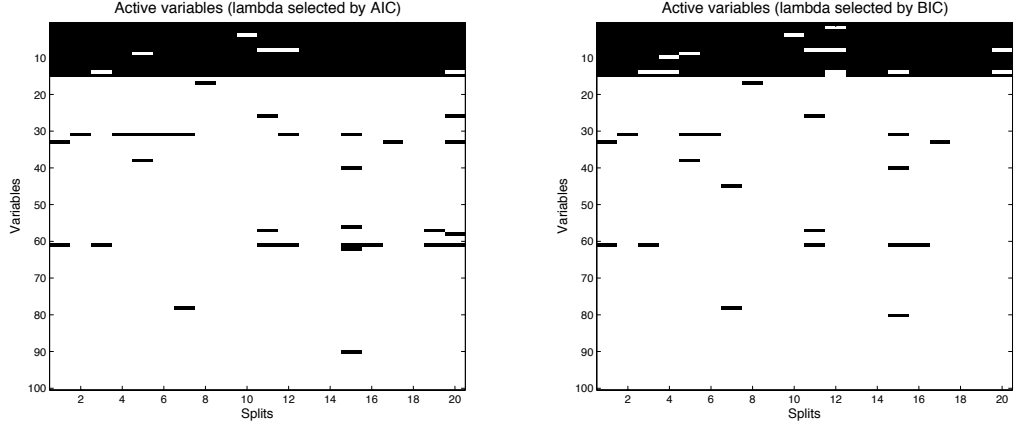}
    \vspace{-1.2cm}
  \end{center}
  \caption{Chosen supports for each split using AIC and BIC criteria.}%
  \label{fig:IllustrMethodeSplits.pdf}%
\end{figure}

\subsubsection*{Final configuration selection}

The information brought by the family of splits can be used to measure
the ``importance'' of each variable by considering the proportion of
splits for which this variable is active, see the
Figure~\ref{fig: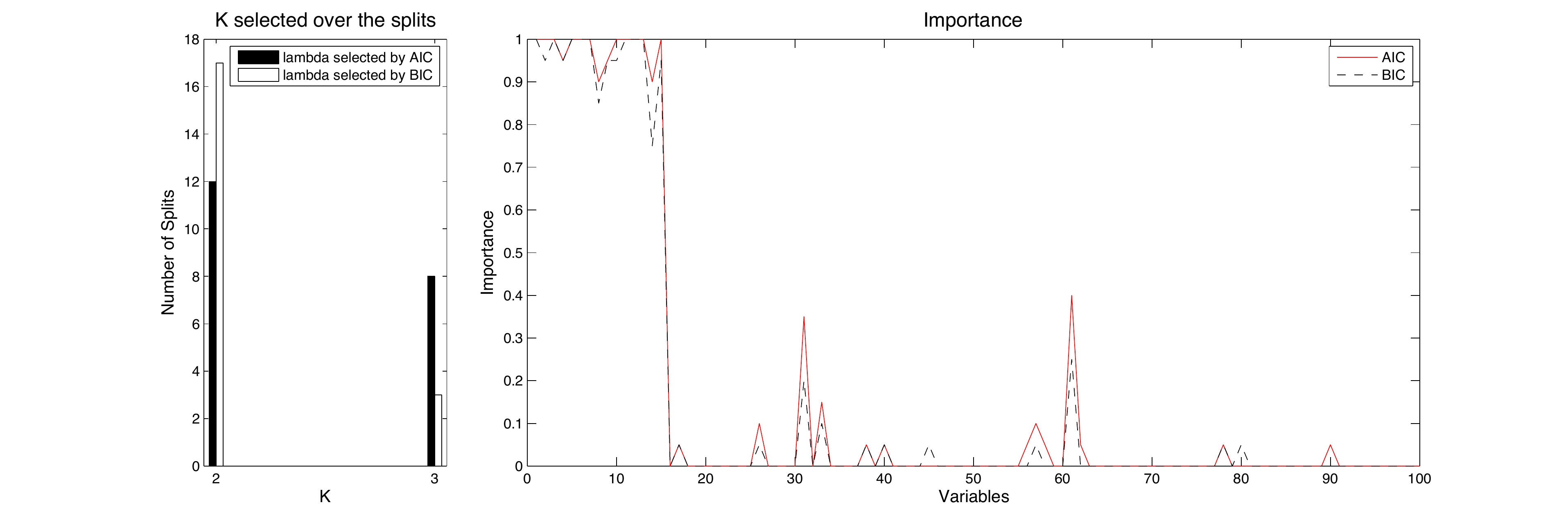} (right). Note that this
measure could also be used to produce a ranking of the variables. One
final configuration $\hat \eta$ is finally chosen using a majority
vote: for each variable, we vote over the $B$ splits to decide if this
variable is active or not. We choose $\hat K$ as the integer the
closest to $ \frac 1B \sum_{b = 1 \ldots B} K(b)$.

The method has been applied on the illustrative example using $20$
splits. Figure~\ref{fig:IllustrMethodesKImport.pdf} (left) shows the
number of components $K$ chosen over the 20 splits: $K=2$ is
majoritarian if the temperature is chosen whether by AIC or BIC. For
this experiment, the exact family of active variable is also correctly
recovered by the voting method with a temperature selected by AIC or
BIC.

\begin{figure}[h]
  \begin{center}
      \includegraphics[width= 0.49    \textwidth]{./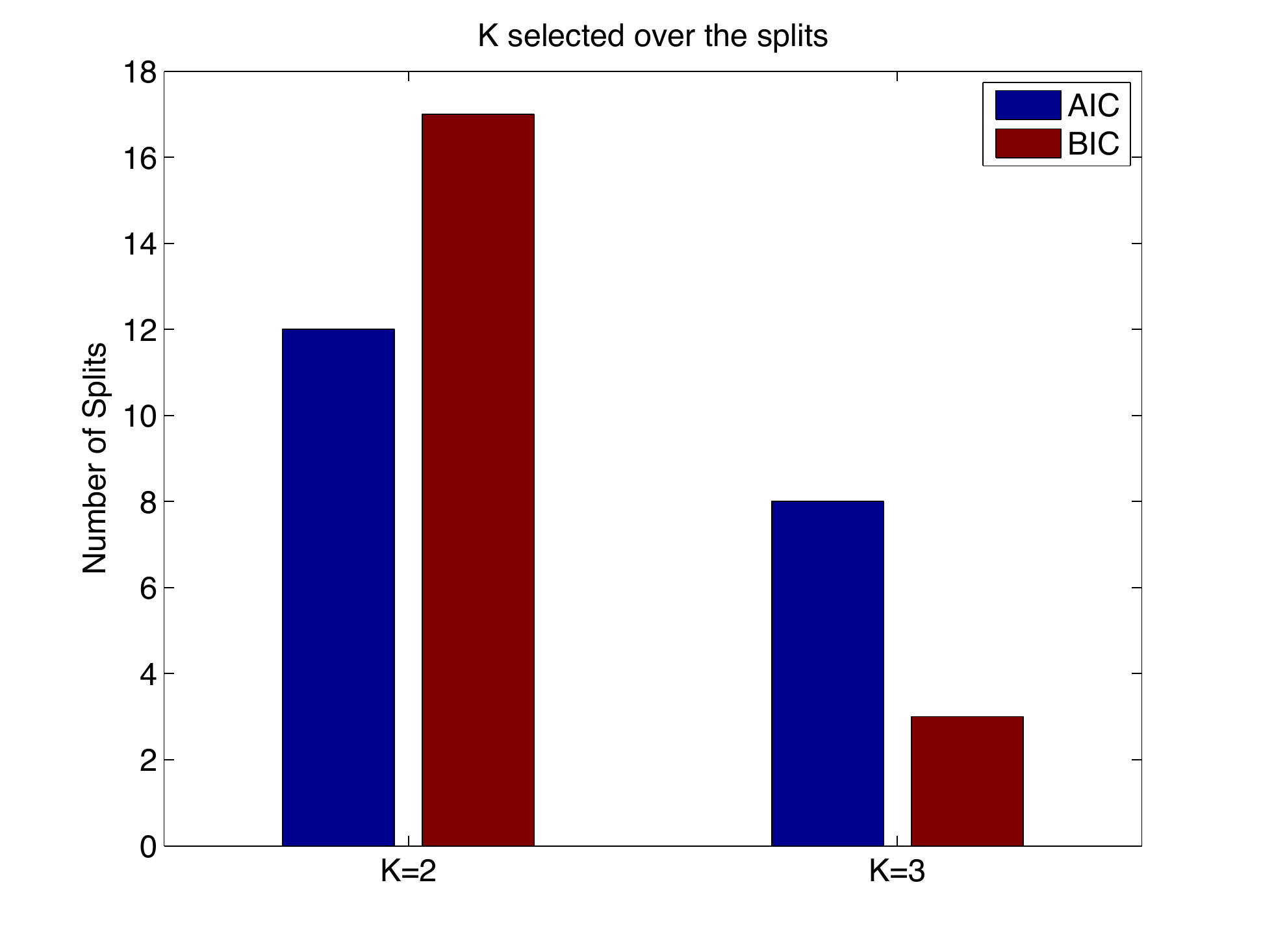}
    \includegraphics[width= 0.49   \textwidth]{./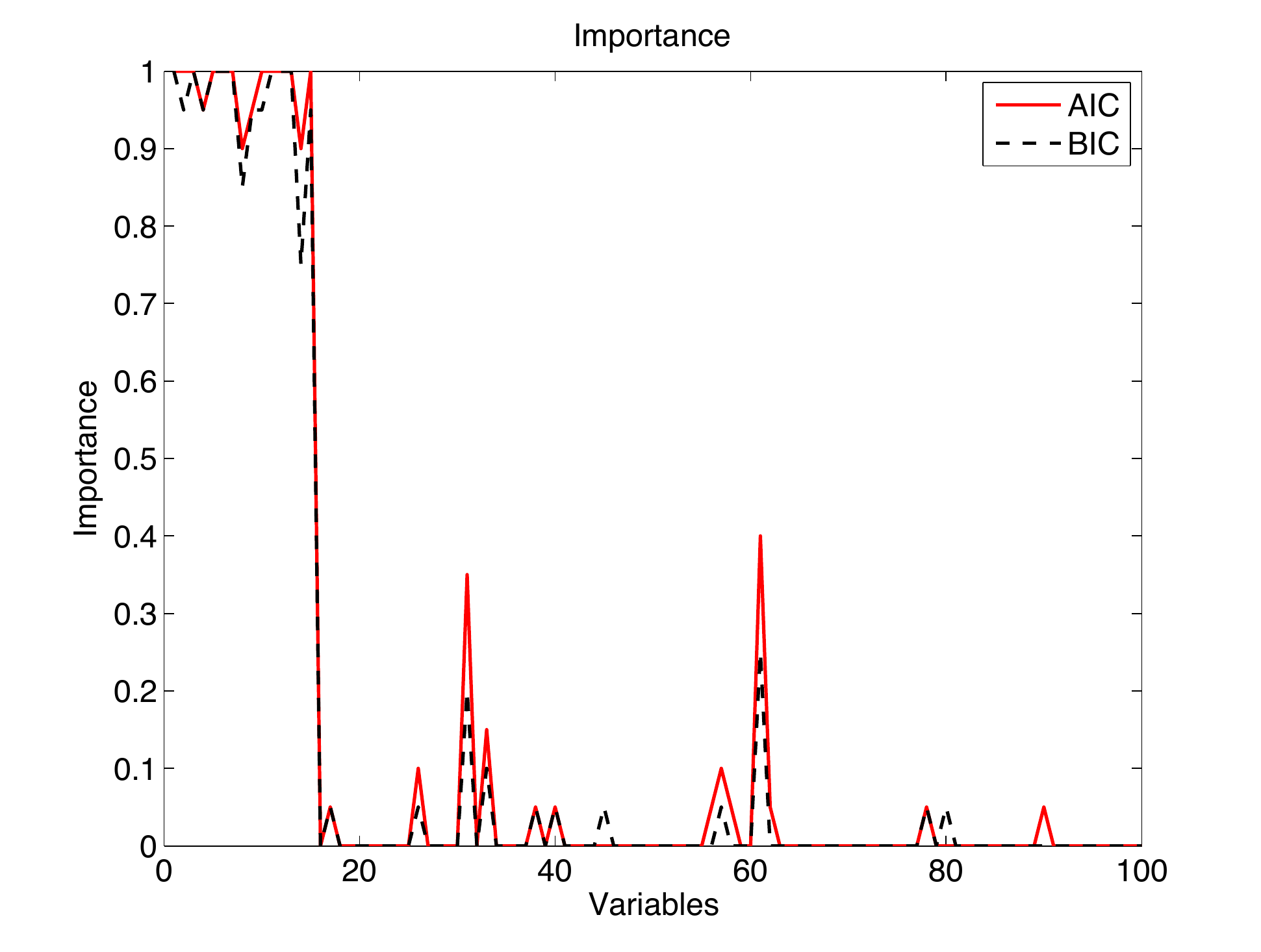}
  \end{center}
  \vspace{-0.9cm}
  \caption{Left: number of components chosen over the 20 splits using
    AIC or BIC to set the temperature for each split. Right: variable
    importance measured over the 20 splits.}%
  \label{fig:IllustrMethodesKImport.pdf}%
\end{figure}

\subsubsection*{Final Clustering}

One can think of two strategies to define a final clustering.
\begin{itemize}
\item \emph{Direct clustering from $\hat \eta$}: we fit a GMM model on
  $\bbX$ for the configuration $\hat \eta$ chosen by the method
  detailed above. We then propose a clustering for $\bbX$ using the
  MAP rule.
\item \emph{Aggregated clustering}: each split provides a
  configuration $\eta(b)$ and an associated clustering thanks to the
  MAP rule. Many methods exists to produce an aggregated clustering using these $B$ clusterings available, we propose two versions :
  \begin{itemize}
  \item {\it Aggregated clustering by CAH}: define $A$ as the
    similarity matrix with entries $a_{i,j}$ equal to the number of
    times $i$ and $j$ are in the same cluster across the splits. Then,
    a hierarchical clustering method gives us a final clustering by
    using for instance $\exp(-A)$ as a dissimilarity matrix.
  \end{itemize}
\end{itemize}
To assess a clustering method, we use the Adjusted Random Index (ARI)
from~\cite{HubertArabie85}, which is an established standard to
measure the correspondences between a given clustering and the true
one. The ARI's of our methods on the illustrative example are reported
in Table~\ref{tab:expIllus}. In this case, a direct clustering deduced
from the configuration $\hat \eta$ gives the best results. Indeed, we
observed on many examples that, quite surprisingly, the aggregated
clustering generally does not provide the best ARI's compared to
direct clustering. This phenomenon certainly deserves further
investigations, to be considered in another work.
\begin{table}
  \centering
  \begin{tabular}{c|c}
    Method  & ARI \\ \hline
    Direct clustering (temperature set by AIC) & 0.9020  \\
    Direct clustering (temperature set by BIC) & 0.9020  \\
    Aggregated clustering  (temperature set by AIC) & 0.8090 \\
    Aggregated clustering (temperature set by BIC) & 0.7557 \\
  \end{tabular}
  \caption{Adjusted Random Indexes (ARI) for the four possible
    clustering methods
    applied to the illustrative example.} 
  \label{tab:expIllus}
\end{table}

\section{Numerical experiments}
\label{sec:numerical-experiments}

We compare variable selection and clustering results for the Lasso
method for GMM from \cite{pan2007penalized} (denoted Lasso-GMM) and
our method (called MH-GMM). On each simulation, the observations are
centered and standardized before being used by both methods. We use
the BIC criterion to tune the temperature for our method and the
smoothing parameter for Lasso-GMM.  In the following experiments, we
only deal with diagonal and shared covariance matrices (the
shape~\eqref{eq:shape-lb}), which is also the setting considered
in~\cite{pan2007penalized}.




\subsubsection*{Experiment~1}


We simulate $100$ replications of a GMM based on three components in
dimension 100. All the variables are independent, only the 20 first
variables are active and the others are i.i.d.  $N (0,1)$. Let $a =
(1, 0.95, 0.9, 0.85, \dots, 0.1, 0.05)$, the distribution of the
vector of the 20 active variables of the first component is $N _{20}
(a, I_{20})$. The distribution of the vector of the second component
is $N_{20} (0_{20}, I_{20})$ and the third is $N_{20} (-a,
I_{20})$. There are $200$ observations in the two first clusters and
$400$ in the last one.

On this example, the discriminant power of the clustering variables
decreases with respect to the variable index: the three sub-populations
of the mixture are progressively gathered together into a unique
Gaussian distribution after the $20$th variable, as illustrated in the
left-hand side of Figure~\ref{fig:Triangle}. Note that the means of
the second component are close to zero.

\subsubsection*{Experiment~2}


We simulate $100$ replications of a GMM based on four components in
dimension 100. These four clusters are based on 15 actives variables
(from 1 to 15), the variances of the active variables are equal to one
for the four components. The other variables are i.i.d. $N(0, 1)$. All
the variables are independent and there are $100$ observations for
components 1 and~4 and $200$ for components~2 and~3. The means of the
four components are given by
\begin{align*}
  \mu_{1} &= (2, \ldots, 2, 0, \ldots, 2), \quad \quad \quad \quad
  \mu_{2} = (0.3, \ldots, 0.3, 0, \ldots, 0), \\
  \mu_{3} &= (-0.3, \ldots, -0.3, 0, \ldots, 0), \quad \mu_{4} = (-2,
  \ldots, -2, 0, \ldots, 0),
\end{align*}
with $15$ non-zero coordinates for each component. For this
experiment, selecting $K$ is more difficult than in the previous
experiment. The mean of each cluster is illustrated in the right-hand
side of Figure~\ref{fig:Triangle}.
\begin{figure}
  \begin{center}
    \includegraphics[width= 0.5\textwidth,
    angle=0]{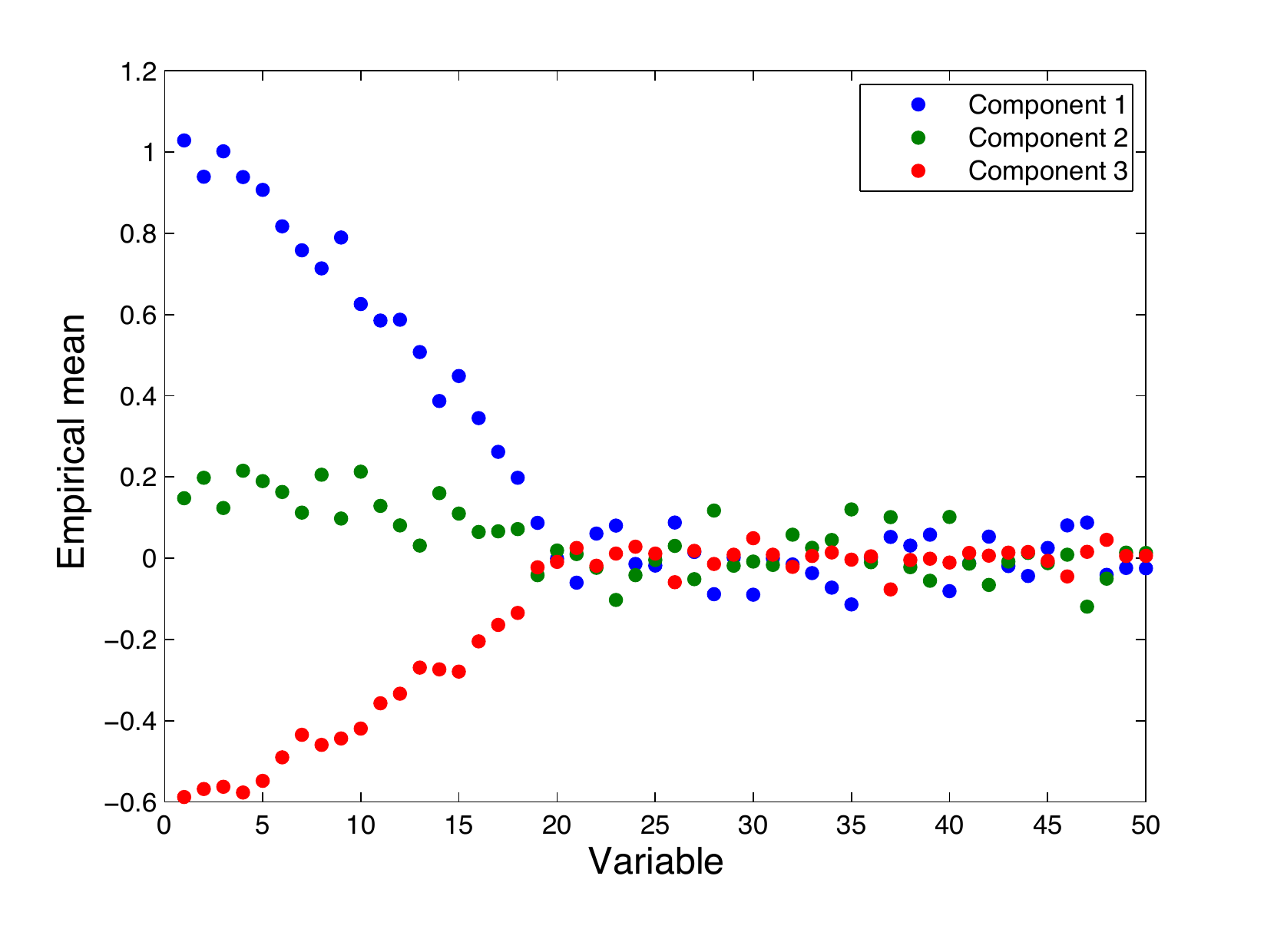}%
    \includegraphics[width=0.5\textwidth,
    angle=0]{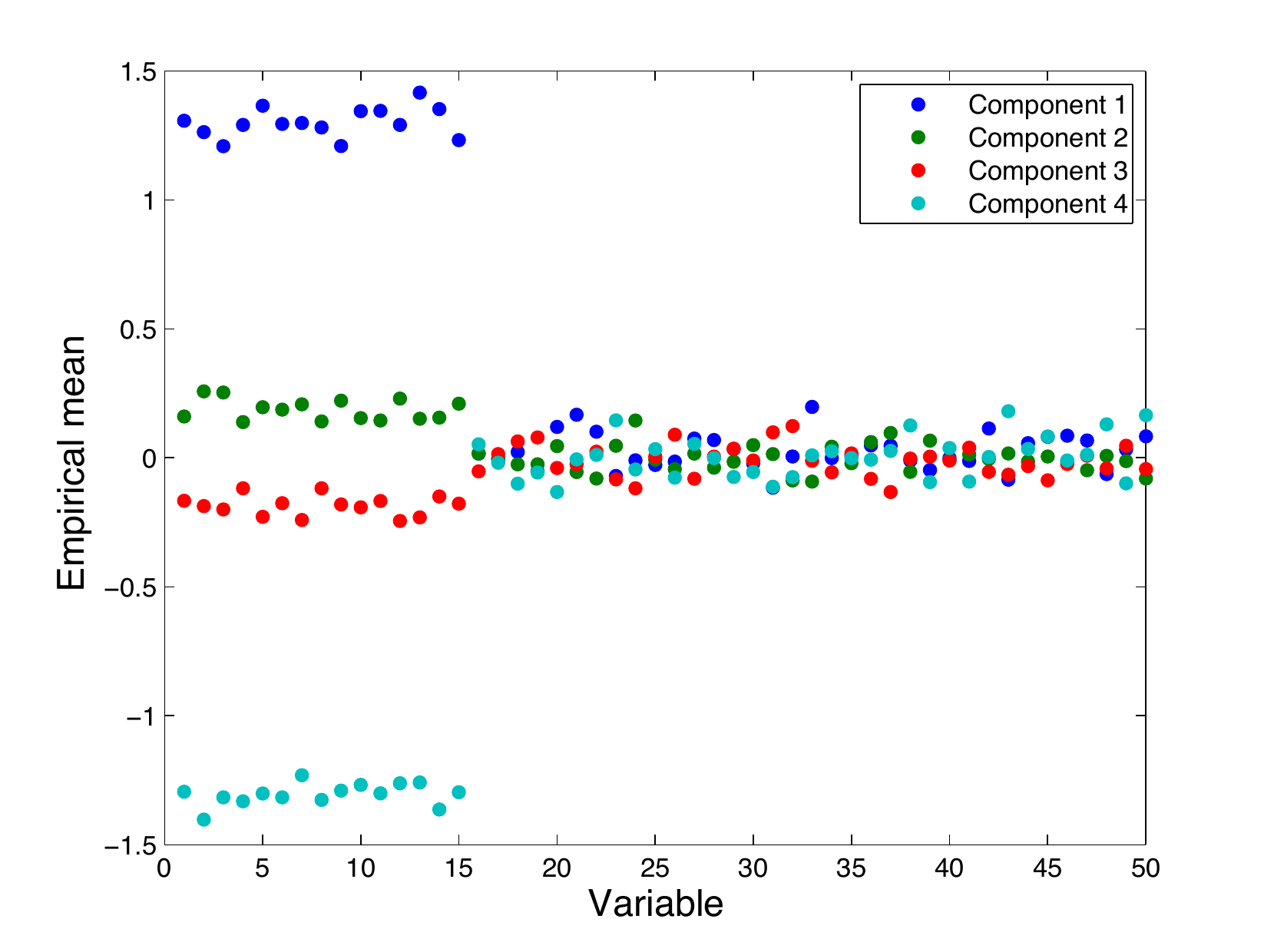}
  \end{center}
  \vspace{-0.9cm}
  \caption{Empirical means of the variables in each component for
    Experiment~1 (left) and Experiment~2 (right).}
  \label{fig:Triangle}%
\end{figure}


\subsubsection*{Results}

We compare variable selection results and clustering results using the
ARI for Lasso-GMM and MH-GMM on Experiments~1 and~2. A variable
selected by one method is called active. Concerning Lasso-GMM, a
variable is active if at least one $\mu_{j, k}$, $k=1, \ldots, K$ is
non-zero while for MH-GMM all the coordinates associated to an active variable
variables are non-zero. The terms true and false refers to the true
configuration. The results are summarized in Table~\ref{tab:variables-K} and
Figure~\ref{fig:ARI-exp1-2}.

On Experiment~1, Lasso-GMM finds the correct active variables but most
of the times it does not activate the coordinates corresponding to the
second component. This leads to a non-optimal estimation of the GMM
even if it has the correct configuration. As a consequence, a large
proportion of observations are misclassified. The MH-GMM method does
not shrink coordinates, so it shows better ARI rates.

On Experiment~2, Lasso-GMM fails to select $K = 4$ correctly 72 times
out of 100 replications while MH-GMM chooses $K = 4$ every
time. Indeed, Lasso-GMM hardly separates Cluster~2 from
Cluster~3. Consequently, MH-GMM has a much better ARI clustering rate
than Lasso-GMM, see the right-hand side of
Figure~\ref{fig:ARI-exp1-2}.


\begin{table}
  \centering
  \begin{tabular}{r|cccc|cc}
    \textbf{Experiment~1} & \multicolumn{4}{c}{Variable selection} &
    \multicolumn{2}{c}{Choice of $K$} \\ \hline%
    & true active & false active & true
    non-active & false non-active & $K = 3$ & $k = 4$ \\%
    \emph{True model} & 20 & 0 & 80 & 0 & 100 & 0 \\%
    Lasso-GMM & 18.56 & 2.17 & 77 & 1.44 & 99 & 1 \\%
    MH-GMM & 16.82 & 0 & 80 & 3.18 & 100 & 0 \\%
    \textbf{Experiment~2} & \multicolumn{4}{c}{Variable selection}
    & \multicolumn{2}{c}{Choice of $K$} \\ \hline%
    & true active & false active & true
    non-active & false non-active & $K = 3$ & $k = 4$ \\%
    \emph{True model} & 15 & 0 & 85 & 0 & 100 & 0 \\%
    Lasso-GMM & 15.03 & 0.81 & 84.19 & 0 & 72  & 28 \\%
    MH-GMM & 15 & 0 & 85 & 0 & 0 & 100 \\ 
  \end{tabular}
  \caption{Variables and $K$ selected by Lasso-GMM and MH-GMM for
    Experiments~1 and~2.}
  \label{tab:variables-K}
\end{table}

\begin{figure}
  \begin{center}
    \includegraphics[width=0.5\textwidth]{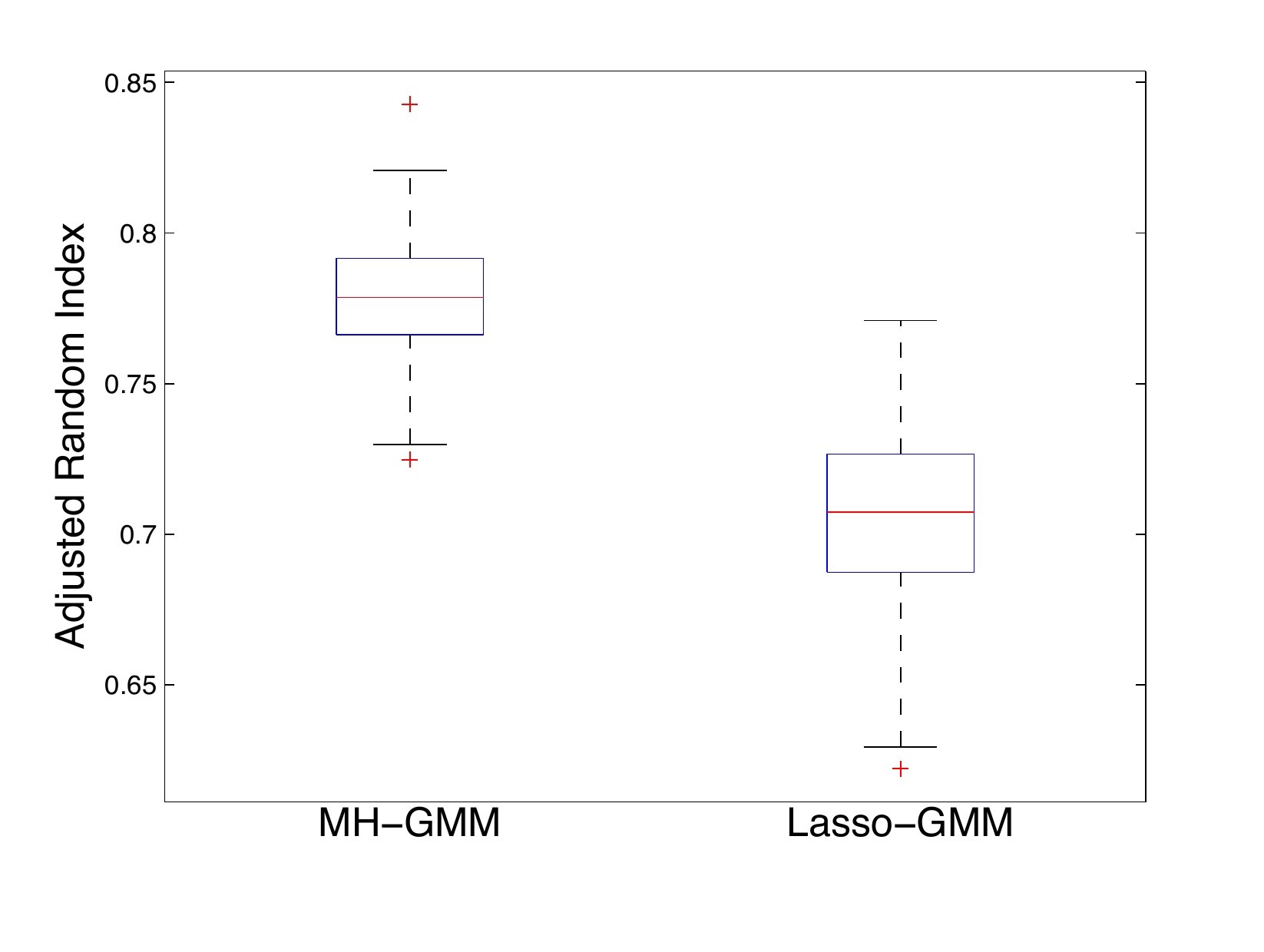}%
    \includegraphics[width=0.5\textwidth]{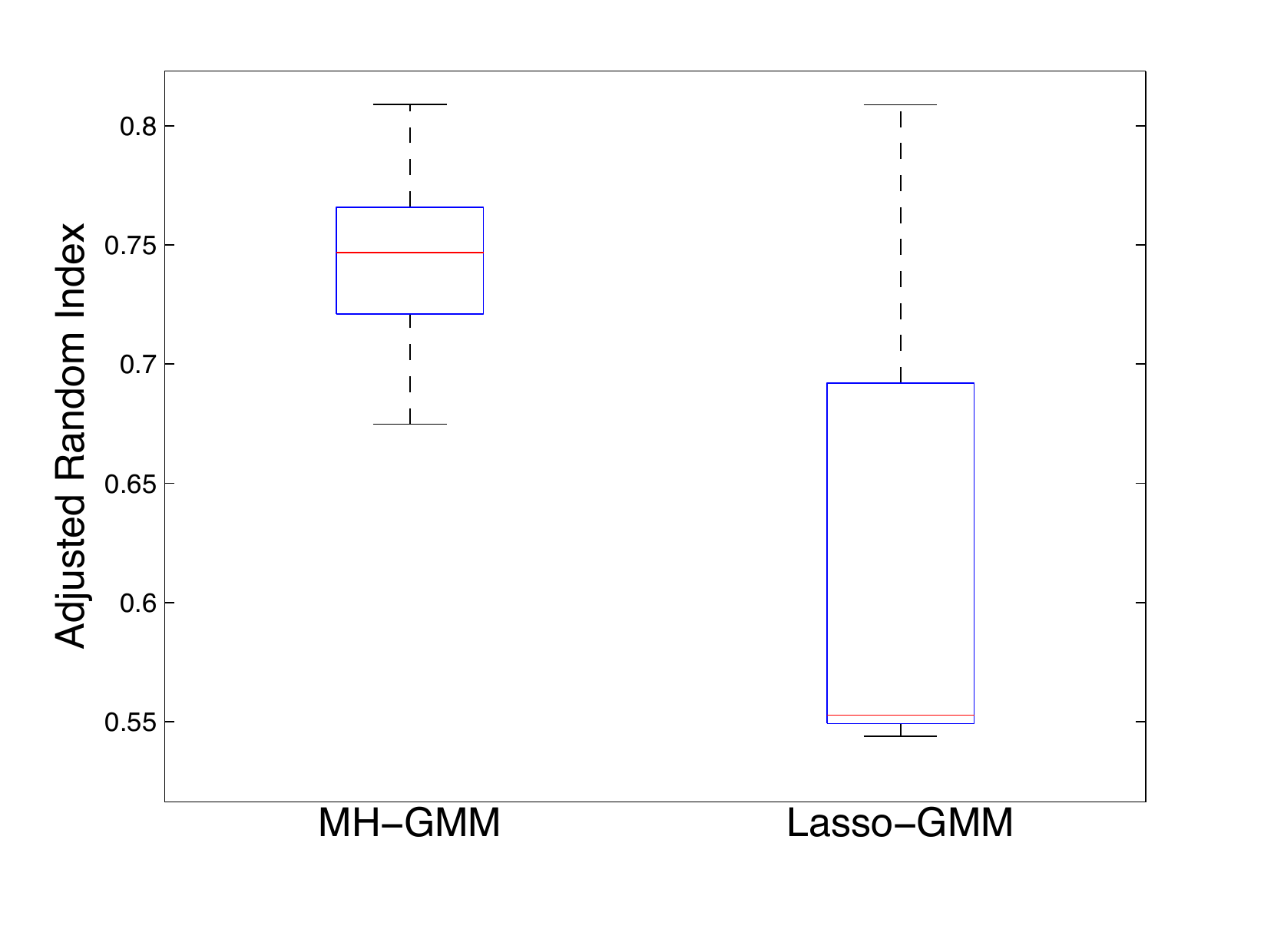}
  \end{center}
  \vspace{-1cm}
  \caption{ARIs for Lasso-GMM and MH-GMM for Experiments~1 and~2}%
  \label{fig:ARI-exp1-2}%
\end{figure}

\section{Proof of Theorem~\ref{thm:main-oracle-inequality}}
\label{sec:proof-thm1}
In this section, we assume that a shape has been fixed. 

\subsection{Some preliminary tools from information theory}

A measure of statistical complexity of a randomized estimator $\hat
\pi$ with respect to a prior $\pi$ is given by the Kullback-Leibler
divergence, defined by
\begin{equation*}
  \kl(\hat \pi, \pi) = \int_{\Upsilon} \ln \frac{d\hat \pi}{d
    \pi}(\eta) \hat \pi(d\eta),
\end{equation*}
assuming that it exists. The statistical risk, or generalization error
of a procedure $\hat \pi$ associated to a loss function
$L_\eta(\cdot)$ on $\Ups \times \cX$ is given by
\begin{equation*}
  \E_{\eta \sim \hat \pi} \E_{X} L_\eta(X) = \int \int L_\eta(x) \hat
  \pi(d\eta) P_X(dx).
\end{equation*}
Let $\pi'$ and $\pi$ be probability measures on $\Ups$, and let $g :
\Ups \rightarrow \R$ be any measurable function such that $\E_{\eta
  \sim \pi} \exp(g(\eta)) < +\infty$. An easy computation gives, for
$\pi_g(d\eta) = \frac{\exp(g(\eta))}{\E_{\eta \sim \pi} \exp(g(\eta))}
\pi(d \eta)$, that
\begin{equation}
  \label{eq:convex-duality1}
  \kl(\pi', \pi_g) = \kl(\pi', \pi) + \ln
  \E_{\eta \sim \pi} \exp(g(\eta)) - \E_{\eta \sim \pi'} g(\eta).
\end{equation}
Since $\kl(\pi', \pi_g) \geq 0$ (using Jensen's inequality)
\eqref{eq:convex-duality1} entails
\begin{equation}
  \label{eq:convex-duality2}
  \E_{\eta \sim \pi'} g(\eta) \leq \kl(\pi', \pi) + \ln
  \E_{\eta \sim \pi} \exp(g(\eta)).
\end{equation}
Inequality~\eqref{eq:convex-duality2} is a well-known convex duality
inequality, see for instance~\cite{Catoni07}, that can be used as
follows. We denote $\bbX_1 = (X_1, \ldots, X_n)$ and let
$L_\eta(\bbX_1)$ be some loss
function. Using~\eqref{eq:convex-duality2} with a randomized estimator $\hat
\pi$ and with the application
\begin{equation*}
  g(\eta) = - L_\eta(\bbX_1) - \ln \E_{\bbX_1} e^{-L_\eta(\bbX_1) }
\end{equation*}
leads to
\begin{equation*}
  \E_{\bbX_1} \exp \bigg( \E_{\eta \sim \hat \pi} \Big[ - L_\eta(\bbX_1) - 
  \ln \E_{\bbX_1} e^{-L_\eta(\bbX_1) } - \kl(\hat \pi, \pi) \Big] \bigg) \leq 1,
\end{equation*}
where we apply $x \mapsto e^x$, take the expectation $\E_{\bbX_1}$ on
both sides of~\eqref{eq:convex-duality2}, and use Fubini's
Theorem. Now, if $L_\eta(\bbX_1) = \sum_{i=1}^n \ell_\eta(X_i)$, we
have using Jensen's inequality on the left-hand side, and rearranging
the terms, that
\begin{equation}
  \label{eq:localized-posterior-bound}
  - \E_{\bbX_1} \E_{\eta \sim \hat \pi} \ln \E_{X_1} e^{-
    \ell_\eta(X_1) } \leq \frac{\E_{\bbX_1} \big[ \E_{\eta \sim \hat
      \pi} L_\eta(\bbX_1) + \kl(\hat \pi, \pi) \big]}{n}.
\end{equation}
This leads to the so-called \emph{information complexity
  minimization}, see~\cite{Zhang06a}: the statistical procedure
obtained by minimizing
\begin{equation}
  \label{eq:characterization-posterior}
  \E_{\eta \sim \hat \pi} L_\eta(\bbX_1) +
  \kl(\hat \pi, \pi)
\end{equation}
over every probability measures $\hat \pi$ on $\Ups$ is explicitly
given by
\begin{equation*}
  \hat \pi_{\text{sol}}(d\eta) = \frac{\exp(-L_\eta(\bbX_1))}{\E_{\eta \sim
      \pi} \exp(-L_\eta(\bbX_1))} \pi(d \eta),
\end{equation*}
since~\eqref{eq:convex-duality1} entails
\begin{equation*}
  \kl(\hat \pi, \hat \pi_{\text{sol}}) = \E_{\eta \sim \hat \pi}
  L_\eta(\bbX_1) + \kl(\hat \pi, \pi) + \ln \E_{\eta \sim \pi}
  \exp( - L_\eta(\bbX_1)).
\end{equation*}
Note that if $L_\eta(\bbX_1) = \sum_{i=1}^n \ell_\eta(X_i)$ with
$\ell_\eta(x) =  - \lambda  \ln f_{\hat \theta(\eta)}(x)$ with $\lambda
> 0$, the solution is given by the Gibbs
posterior~\eqref{eq:gibbs-posterior}, which is in this case the
generalized Bayesian posterior
distribution~\eqref{eq:generalized-bayesian-posterior}.

For any $\rho \in (0, 1)$ we can define the $\rho$-divergence between
probability measures $P, Q$ as
\begin{equation*}
  \cD_\rho(P, Q) = \frac{1}{\rho(1-\rho)} \E_P\Big[ 1 -
  \Big( \frac{dQ}{dP} \Big)^\rho \Big],
\end{equation*}
the case $\rho = 1/2$ giving $\cD_\rho(P, Q) = 4 \cH(P, Q)$, where
$\cH(P, Q) = \E_P(1 - \sqrt{dQ / dP})$ is the Hellinger distance. Note
that
\begin{equation}
  \label{eq:hellinger-rho-div}
  \frac 12 \cH^2(P, Q) \leq \max(\rho, 1-\rho) \cD_\rho(P, Q)
\end{equation}
for any $\rho \in (0, 1)$, see~\cite{Zhang06b}.

\subsection{Proof of Theorem~\ref{thm:main-oracle-inequality}}

Since $1 - x \leq -\ln x$ for any $x \geq 0$, we have
\begin{equation*}
  \lambda (1 - \lambda)   \cD_\lambda(f^*, f_{\hat \theta(\eta)}) \leq -  \ln \E_{X_1} \exp ( -\ell_\eta(X_1) ),
\end{equation*}
where we consider $\ell_\eta(x) = \lambda \ln \frac{f^*(X)}{f_{\hat
    \theta(\eta)}(X)}$. So, using~\eqref{eq:localized-posterior-bound}
and the definition of $\hat \pi_\lambda$, we have
\begin{align*}
  \lambda (1 - \lambda) \E_{\bbX_1} \E_{\eta \sim \hat \pi_\lambda}
  \cD_\lambda(f^*, f_{\hat \theta(\eta)}) &\leq \E_{\bbX_1} \Big[
  \frac{\lambda}{n} \E_{\eta \sim \hat \pi_\lambda} \sum_{i=1}^n \ln
  \frac{f^*(X_i)}{f_{\hat \theta(\eta)}(X_i)} + \frac 1n \kl(\hat
  \pi_\lambda, \pi) \Big] \\
  &= \E_{\bbX_1} \Big[ \inf_{\hat \pi} \Big( \frac{\lambda}{n} \E_{\eta
    \sim \hat \pi} \sum_{i=1}^n \ln
  \frac{f^*(X_i)}{f_{\hat \theta(\eta)}(X_i)} + \frac 1n \kl(\hat
  \pi, \pi) \Big) \Big] \\
  &\leq \inf_{\hat \pi} \Big( \lambda \E_{\eta \sim \hat \pi} \kl(f^*,
  f_{\hat \theta(\eta)}) + \frac 1n \kl(\hat \pi, \pi) \Big),
\end{align*}
where the infimum is taken among any probability measure on $\Ups$.
Using~\eqref{eq:hellinger-rho-div}, this leads to
\begin{align*}
  \frac 12 \E_{\bbX_1} \E_{\eta \sim \hat \pi_\lambda} \cH(f^*,
  f_{\hat \theta(\eta)}) &\leq \max(\lambda, 1-\lambda) \E_{\bbX_1} \E_{\eta
    \sim \hat
    \pi_\lambda} \cD_\lambda(f^*, f_{\hat \theta(\eta)}) \\
  &\leq \frac{\max(\lambda, 1 - \lambda)}{\lambda(1 - \lambda)}
  \inf_{\hat \pi} \Big( \lambda \E_{\eta \sim \hat \pi} \kl(f^*,
  f_{\hat \theta(\eta)}) + \frac 1n \kl(\hat \pi, \pi) \Big).
\end{align*}
By considering only the subset of Dirac distributions over $\Ups$, we
obtain
\begin{align*}
  \E_{\bbX_1} \E_{\eta \sim \hat \pi_\lambda} \cH(f^*, f_{\hat
    \theta(\eta)}) &\leq c_\lambda \inf_{\substack{K \in \N^* \\ S
      \subset \{ 1, \ldots, d \}}} \bigg( \lambda \kl(f^*, f_{\hat
    \theta(K, S)}) + \frac{1}{n}  \ln \Big( \frac{1}{\pi(K, S)} \Big)
  \bigg),
\end{align*}
where $c_\lambda$ is defined in the statement of
Theorem~\ref{thm:main-oracle-inequality}. Since
\begin{equation*}
   \ln \Big( \frac{1}{\pi(K, S)}
  \Big)  =  \ln \Big( \frac{1}{\pi_{\text{clust}}(K)}
  \Big) +  \ln \Big( \frac{1}{\pi_{\text{supp}}(S)}
  \Big) \leq  \ln K! + 1 + 2 |S|  \ln \Big( \frac{e d}{|S|}
  \Big),
\end{equation*}
it gives that
 \begin{equation*}
    \E_{\bbX_1}  \E_{\eta \sim \hat \pi_\lambda} \cH^2(f^*,
     f_{\hat \theta(\eta)}) \leq c_\lambda \inf_{\substack{K \in \N^* \\ S
        \subset \{ 1, \ldots, d \}}} \bigg\{ \lambda \kl(f^*,
    f_{\hat \theta(K, S)}) + \frac{ \ln K! + 1 + 2 |S|  \ln ( e
      d / |S|) )}{n} \bigg\} .
  \end{equation*}
The other inequalities given in Theorem~\ref{thm:main-oracle-inequality} are straightforward  using the convexity properties of
the Hellinger distance (see for instance Lemma 7.25 in \cite{massart03}).

\subsection{Proof of Theorem~\ref{theo:BorneSupAvecMLE}}

We start with an elementary lemma for bounding sup-norm of density ratios.
\begin{Lemma}  \label{lem:normsup}
Let $f^\star$ be density in $\R^d$ such that $\| f^\star \| _\infty <  \infty $ and such that the
support of $f ^\star$ is included in $B(0,\bar \mu)$. Then, for any GMM shape, for any $ \eta \in \Ups $ and any $\theta \in \Theta_r $:
$$ 1 \leq  \left\| \frac {f ^\star} {f_{\theta}}\right\|_{\infty} \leq \|f^\star\|_\infty   L ^ +  \exp \left(  \frac {2 {\bar \mu}^2} {\overline{\sigma} ^2 }  \right) $$ 
\end{Lemma}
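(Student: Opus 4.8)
The lower bound is immediate: since $f^\star$ has support in $B(0,\bar\mu)$ while $f_\theta > 0$ everywhere on $\R^d$, on the support of $f^\star$ we have $f^\star/f_\theta \geq 0$, and integrating $\int f^\star = \int f_\theta = 1$ forces the ratio to be $\geq 1$ somewhere; more simply, $\sup_x f^\star(x)/f_\theta(x) \geq \int f^\star(x)\,dx \big/ \int f_\theta(x)\,dx$ is not quite what is wanted, so instead I would argue directly that $\| f^\star/f_\theta\|_\infty \geq 1$ because if it were $< 1$ then $f^\star < f_\theta$ pointwise, contradicting that both integrate to $1$. For the upper bound, the plan is to write $\| f^\star / f_\theta \|_\infty \leq \| f^\star\|_\infty \cdot \| 1/f_\theta \|_{\infty, B(0,\bar\mu)}$, so that everything reduces to a \emph{lower} bound on $f_\theta(x)$ valid for all $x \in B(0,\bar\mu)$ and all $\theta \in \Theta_r(\eta)$.

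To lower-bound $f_\theta(x)$ on the ball $B(0,\bar\mu)$, I would use the decomposition~\eqref{eq:decompftheta}, $f_\theta = \phi_{(d-|S|)}(\cdot\,|0,I)\cdot\sum_k p_k\,\phi_{(S)}(\cdot\,|(\mu_k)_S,(\Sigma_k)_S)$. Pick any single component $k$ with $p_k > 0$; then $f_\theta(x) \geq p_k\, \phi_{(d)}(x\,|\mu_k,\Sigma_k)$, and I bound the $d$-dimensional Gaussian density from below on $B(0,\bar\mu)$. Using $|\mu_k| \leq \bar\mu$, so $|x-\mu_k|^2 \leq (2\bar\mu)^2 = 4\bar\mu^2$ on the ball, together with $\mathrm{sp}(\Sigma_k)\subset[\underline\sigma^2,\bar\sigma^2]$ which gives $(x-\mu_k)^\top\Sigma_k^{-1}(x-\mu_k) \leq |x-\mu_k|^2/\underline\sigma^2$, one gets the exponential factor bounded below by $\exp(-2\bar\mu^2/\underline\sigma^2)$ — wait, the statement has $\bar\sigma^2$ in the denominator, so I should instead use the cruder bound with $\bar\sigma$, or note the constant $\mathcal{A}$ absorbs the choice; in any case the normalizing constant $(2\pi)^{-d/2}|\Sigma_k|^{-1/2}$ is controlled by the constraint $\underline L \leq (2\pi)^{d/2}|\Sigma_k| \leq \bar L$, which yields $(2\pi)^{-d/2}|\Sigma_k|^{-1/2} \geq (L^+)^{-1}$ for the appropriate $L^+ = \bar L^{1/2}$ (matching the $L^+$ appearing in the statement). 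Combining, $f_\theta(x) \geq (L^+)^{-1}\exp(-2\bar\mu^2/\bar\sigma^2)$ on $B(0,\bar\mu)$ (after the appropriate bookkeeping on which $\sigma$ bound to use), and multiplying by $\|f^\star\|_\infty$ gives the claimed inequality.

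The only genuine subtlety — hence the main obstacle — is tracking exactly which bounding parameter ($\underline\sigma$ vs.\ $\bar\sigma$, $\underline L$ vs.\ $\bar L$) enters each estimate so that the final constant reads $\|f^\star\|_\infty L^+ \exp(2\bar\mu^2/\bar\sigma^2)$ precisely as stated, and making sure the argument is uniform over \emph{all} GMM shapes: since every admissible shape still yields covariances obeying $\mathrm{sp}(\Sigma_k)\subset[\underline\sigma^2,\bar\sigma^2]^d$ and the determinant constraint by definition of $\Theta_r(\eta)$, the lower bound on each Gaussian component is shape-independent, so the reduction to a one-component bound works verbatim in every case. I would also remark that one loses nothing by restricting to a $p_k$ with $p_k \geq 1/K$ if one wanted an explicit constant, but since we are only after a sup-norm bound that need not be sharp, using $\max_k p_k \geq 1/K$ or simply noting $\sum_k p_k = 1$ suffices to keep the constant as displayed.
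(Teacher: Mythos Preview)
Your overall strategy---bound $\|f^\star/f_\theta\|_\infty \le \|f^\star\|_\infty / \inf_{x\in B(0,\bar\mu)} f_\theta(x)$ and then lower-bound $f_\theta$ on the ball using the constraints defining $\Theta_r(\eta)$---is exactly what the paper does. The lower bound $\|f^\star/f_\theta\|_\infty\ge 1$ via ``both integrate to $1$'' is fine and matches the paper's one-line argument that $f^\star\ll f_\theta$.

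There is, however, a genuine exposition gap in how you handle the mixture weights. Your line ``pick any single component $k$ with $p_k>0$; then $f_\theta(x)\ge p_k\,\phi_{(d)}(x\mid\mu_k,\Sigma_k)$'' leaves a factor $p_k$ that can be arbitrarily small, and invoking $\max_k p_k\ge 1/K$ would insert a $K$ into the constant that is not in the statement. What actually works---and what you gesture at in the very last sentence with ``$\sum_k p_k=1$''---is to observe that the lower bound you derive on each Gaussian factor is \emph{uniform in $k$} (since the constraints in $\Theta_r(\eta)$ are uniform in $k$), so
\[
f_\theta(x)=\sum_{k} p_k\,\phi_{(d)}(x\mid\mu_k,\Sigma_k)\ \ge\ \Big(\sum_k p_k\Big)\cdot\min_k \phi_{(d)}(x\mid\mu_k,\Sigma_k)\ \ge\ (L^+)^{-1}\exp\!\big(-2\bar\mu^2/\sigma^2\big).
\]
Make this the main argument, not an afterthought. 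The paper reaches the same place by a slightly different route: it first pulls out the determinant factor and then applies Jensen to the convex map $t\mapsto e^{-t}$,
\[
\sum_k p_k\,e^{-a_k}\ \ge\ \exp\!\Big(-\sum_k p_k\,a_k\Big),\qquad a_k=\tfrac12(x-\mu_k)^\top\Sigma_k^{-1}(x-\mu_k),
\]
and then bounds $\sum_k p_k a_k$. Your uniform-in-$k$ argument is marginally simpler and gives the same constant.

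On the $\underline\sigma$ versus $\bar\sigma$ point you flag: you are right that $(x-\mu_k)^\top\Sigma_k^{-1}(x-\mu_k)\le |x-\mu_k|^2/\underline\sigma^2$ is the inequality that the spectral constraint actually yields (since $\max\mathrm{sp}(\Sigma_k^{-1})=1/\underline\sigma^2$). The appearance of $\bar\sigma^2$ in the displayed bound is a slip in the paper; the argument goes through verbatim with $\underline\sigma^2$, and this propagates harmlessly into the constant $C$ of Theorem~\ref{theo:BorneSupAvecMLE}.
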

\begin{proof}
Let $\eta = (K,S) \in \Ups$ and $\theta \in \Theta_c $.  We have $\left\|\frac {f^\star}{ f_\theta} \right\|_{\infty} \geq 1 $ because $f^\star \ll f_\theta  $ since $f ^\star$ has a bounded support. For the other inequality, first  note that
\begin{equation} \label{eq:rapportborne}
\left\|\frac {f^\star}{ f_\theta} \right\|_{\infty} \leq  \frac{\|f\|_\infty}{\inf_{x \in B(0,\mu)} f _\theta}
\end{equation}
According to the constraints on the determinant of the covariance matrices, for any $x \in B(0, \mu)$:
\begin{eqnarray*}
f_\theta(x) & \geq & \frac 1 {L ^ +}  \sum _{k=1}^K p_k \exp \left[- \frac 1 2 (x - \mu_k)' \Sigma _ k  ^{-1}  (x- \mu_k)\right]  \\
& \geq & \frac 1 {L ^ +}   \exp \left[-\frac 1 2  \sum _{k=1}^K p_k  \|x - \mu_k\| ^2 \max \left(\mbox{sp}( \Sigma^{-1})\right) \right]  \\
& \geq & \frac 1 {L ^ +}   \exp \left[-   \frac { 2  {\bar \mu}^2} {\overline{\sigma} ^2 }  \right] .
\end{eqnarray*}
and the Lemma is proved using (\ref{eq:rapportborne}).
\end{proof}

It is well known that rates of convergence of ML-estimators can be stated by computing bracketing entropies of the statistical models involved, (see \cite{WongShen95,vandeGeer2000} among others). Let $\mathcal F$ be a set of densities with respect of the Lebesgue measure. An $\varepsilon$-bracketing for $\mathcal F$  with respect to $\cH$ is a set of integrable function pairs $(l_1,m_1),\ldots,(l_N,m_N)$ such that for each $f\in \mathcal F$, there exists $j\in\{1,\ldots,N\}$ such that $l_j\leq f\leq m_j$ and $\cH(l_j,m_j)\leq \varepsilon$. The bracketing number $\mbox{N}_{[.]}(\varepsilon,\mathcal F,\cH)$ is the smallest number of $\varepsilon$-brackets necessary to cover $\mathcal F$ and the bracketing entropy is defined by 
$\mbox{H}_{[.]}(\varepsilon,\mathcal F,\cH)= \ln \left\{\mbox{N}_{[.]}(\varepsilon,S,\cH)\right\}$. 

Rates of convergences of MLE in GMM was first studied by  \cite{Genovese:00} and \cite{Ghosal:01}, following a method introduced by \cite{WongShen95}.
Here we use the following result which can be, easily rewritten from the proof of Theorem 7.11 in \cite{massart03}. This result gives an
exponential deviation bound for the Hellinger risk of a maximum likelihood estimator. Note that for our problem we do not need an uniform control of
the risk of the estimators over the  model collection. 

Assume that there exists a nondecreasing function : $\Psi$ such that $x\rightarrow\Psi (x)/x$ is nonincreasing on $]0,+\infty[$ and such that for any
$\xi \in \mathbb{R}_{+}$ and any $u \in \mathcal F$:
\begin{equation} \label{eq:intDudley}
\int_{0}^{\xi} \sqrt{\mathcal{H}_{[.]}(x,\mathcal F}(g,\xi),\cH)  \, dx \leq  \Psi(\xi)
\end{equation}
where $\mathcal F(g,\xi) :=\{t\in \mathcal F; \cH(t,g)\leq \xi\}$.
\begin{Theorem} [Adapted from the proof of Theorem  7.11  in \cite{massart03}] \label{theoSaintFlour} Under the previous assumptions, let $\hat f $ be
a MLE on $\mathcal F$ defined using a sample $X_{1},\ldots,X_{n}$ of i.i.d. random variables with density $f^\star$. Let $\bar f \in \mathcal F$ such
that $\cH^2(f^\star,\bar f) \leq 2 \inf_{g \in \mathcal F} \cH^2(f^\star,g) $ and let $\xi_{n}$ denotes the unique positive solution of the equation
\begin{equation} \label{eq:WSHyp}
\Psi(\xi_n)=\sqrt{n}\,\xi_n^{2}.
\end{equation}
Then, there exists an absolute constant $\kappa'$ such that, except on a set of probability $\exp(-x)$,
$$ \cH ^2 (f^\star, \hat f)  \leq  \frac 4  {2 \ln 2 -   1 }\kl (f ^\star, \mathcal F) +   \kappa'  \left( \xi_n^2 + \frac x {n} \right)  + (P_n - P)
\left( \frac 1 2  \ln \frac {\bar f }{ f^\star}  \right) $$
where $P$ is the probability measure of density $f^\star$ and $P_n$ is the empirical measure for the observations $X_{1},\ldots,X_{n}$.
\end{Theorem}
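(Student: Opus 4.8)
The plan is to specialize Massart's maximum-likelihood argument (the proof of Theorem~7.11 in \cite{massart03}) to a single fixed model $\cF$, keeping the approximation term in Kullback form and deliberately leaving the fluctuation attached to the \emph{fixed} reference $\bar f$ explicit. Write $\gamma_n(f) = P_n(-\ln f)$ for the empirical log-contrast, so that $\hat f$ minimizes $\gamma_n$ over $\cF$. The starting point is the \textbf{basic inequality} $\gamma_n(\hat f) \le \gamma_n(\bar f)$, i.e. $P_n \ln (\hat f / f^\star) \ge P_n \ln (\bar f / f^\star)$.

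\textbf{Taming the unbounded log-ratio.} The first obstacle is that $\ln(\hat f / f^\star)$ is unbounded, so the associated empirical process cannot be controlled by chaining. I would remove this with the \emph{midpoint/convexity device}: introduce the function $\psi_f = \frac{1}{2} \ln \frac{f^\star + f}{2 f^\star}$, which is bounded above since $\frac{f^\star + f}{2 f^\star} \ge \frac 12$. Concavity of $\ln$ yields an inequality of the form $\ln(\hat f / f^\star) \le 4 \psi_{\hat f}$; feeding this into the basic inequality and splitting $P_n = P + (P_n - P)$ converts the left-hand side into the Kullback divergence to the midpoint $\tilde f = (f^\star + \hat f)/2$ and produces a \emph{deterministic} inequality of the shape
\[
(2\ln 2 - 1)\,\cH^2(f^\star, \hat f) \;\le\; 2\,\kl(f^\star, \bar f) \;+\; \nu_n(\hat f) \;+\; (P_n - P)\Big( \tfrac12 \ln \tfrac{\bar f}{f^\star} \Big),
\]
where $\nu_n(f) = (P_n-P)(-\psi_f)$ (up to a numerical factor) is a centered empirical process over the \emph{uniformly bounded} family $\{\psi_f : f \in \cF\}$. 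Two comparison inequalities do the bookkeeping here: the lower bound $\kl(f^\star, \tilde f) \ge (2\ln 2 - 1)\,\cH^2(f^\star, \hat f)$ between the midpoint Kullback divergence and the squared Hellinger distance (same spirit as~\eqref{eq:klHelling}) supplies the constant $2\ln 2 - 1$, while the choice of $\bar f$ as a Hellinger near-minimizer together with~\eqref{eq:klHelling} lets me bound $\kl(f^\star, \bar f)$ by $\kl(f^\star, \cF)$ up to the announced factor $\frac{4}{2\ln 2 - 1}$. The last term involves only the fixed density $\bar f$: I would \emph{not} attempt to bound it, but leave it as the explicit residual $(P_n - P)(\frac12 \ln \frac{\bar f}{f^\star})$ of the statement. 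This is legitimate precisely because it is mean-zero, so it disappears upon integration, which is exactly how it is used in Theorem~\ref{theo:BorneSupAvecMLE}.

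\textbf{Controlling the localized empirical process.} The core is bounding the centered process $\nu_n(\hat f)$. I would use \emph{peeling/localization}: slice $\cF$ into Hellinger shells $\{ f : \sigma 2^{j} < \cH(f^\star, f) \le \sigma 2^{j+1} \}$ and control $\sup \nu_n$ on each shell. Because $\psi_f$ is bounded, the variance of $\psi_f$ is of order $\cH^2(f^\star, f)$ on a shell, so the expected supremum is governed by the bracketing-entropy integral assumption~\eqref{eq:intDudley}, i.e. by $\Psi$ evaluated at the shell radius, via Dudley's chaining; a Talagrand--Bousquet concentration inequality for the supremum then adds the deviation term $x/n$ on the exceptional set of probability $e^{-x}$. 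Summing the shells — using that $x \mapsto \Psi(x)/x$ is nonincreasing, so the geometric series converges — and balancing the fluctuation $\Psi(\sigma)/\sqrt n$ against $\sigma^2$ is exactly what forces the fixed-point equation~\eqref{eq:WSHyp}, $\Psi(\xi_n) = \sqrt n\, \xi_n^2$, and yields a bound of order $\xi_n^2 + x/n$ for $\nu_n(\hat f)$, after absorbing a small multiple of $\cH^2(f^\star,\hat f)$.

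\textbf{Assembly and the main difficulty.} Finally I would bound $\nu_n(\hat f)$ by $\varepsilon\,\cH^2(f^\star,\hat f) + \kappa'(\xi_n^2 + x/n)$ for a small $\varepsilon$, move the Hellinger term to the left of the deterministic inequality, and read off the three announced contributions: the Kullback bias $\frac{4}{2\ln 2 - 1}\kl(f^\star,\cF)$, the rate/deviation $\kappa'(\xi_n^2 + x/n)$, and the fixed-reference residual. I expect the main obstacle to be Step~3: arranging that the variance of the localized increments scales like $\cH^2$ — so that localization closes and the fixed point $\xi_n$ captures the true rate — while $\psi_f$ is only \emph{one-sided} bounded. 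This is precisely the point where the midpoint device and Massart's concentration machinery must be combined carefully; everything else is then essentially bookkeeping of constants.
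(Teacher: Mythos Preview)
The paper does not supply its own proof of this statement: it is quoted as a result ``easily rewritten from the proof of Theorem~7.11 in \cite{massart03}'', with no argument given beyond that citation. Your outline is an accurate reconstruction of Massart's scheme --- the basic MLE inequality, the midpoint device $g_f=-\ln\frac{f+f^\star}{2f^\star}$ to obtain a class bounded from above, the comparison $\kl\big(f^\star,(f^\star+f)/2\big)\ge(2\ln 2-1)\,\cH^2(f^\star,f)$, and the peeling/chaining/Talagrand control of the localized process leading to the fixed point~\eqref{eq:WSHyp}. Leaving the centered term $(P_n-P)\big(\tfrac12\ln\tfrac{\bar f}{f^\star}\big)$ explicit rather than bounding it is exactly the adaptation the paper makes relative to Massart's original formulation.

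One point to tighten: your sentence ``the choice of $\bar f$ as a Hellinger near-minimizer together with~\eqref{eq:klHelling} lets me bound $\kl(f^\star,\bar f)$ by $\kl(f^\star,\cF)$'' does not work as written. The upper half of~\eqref{eq:klHelling} would introduce $\|f^\star/\bar f\|_\infty$, which the Hellinger-near-minimizer hypothesis does not control. The clean route is to arrange the decomposition so that the bias enters through the \emph{midpoint} of $\bar f$, i.e.\ as $Pg_{\bar f}=\kl\big(f^\star,\tfrac{f^\star+\bar f}{2}\big)$; then the uniform bound $\big\|2f^\star/(f^\star+\bar f)\big\|_\infty\le 2$ together with~\eqref{eq:klHelling} and the convexity of $\cH^2$ in its second argument gives $Pg_{\bar f}\le (1+\ln 2)\,\cH^2(f^\star,\bar f)\le 2(1+\ln 2)\inf_g\cH^2(f^\star,g)\le (1+\ln 2)\,\kl(f^\star,\cF)$. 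With this adjustment your plan matches the intended argument.
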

Since $(P_n - P) \left( \frac 1 2  \ln \frac {\bar f }{ f^\star}  \right)$ is centered at expectation,  by integrating this tail bound  we find that
\begin{equation} \label{eq:boundHelling}
\E_{f \star} \cH ^2 (f^\star, \hat f)  \leq  \frac 4  {2 \ln 2  -   1 }\kl (f ^\star, \mathcal F) +   \kappa'  \left( \xi_n^2 + \frac 1 {n} \right) 
. 
\end{equation}

For a fixed shape, a configuration $\eta$ and the bounding parameters $\bar{\mu}$, $\underline{\sigma} < 1 <  \bar{\sigma}$, $\underline{L}  <
\bar{L}$, remember that  $\mathcal F_\eta $ is  the set of Gaussian mixture densities parametrized by $\Theta_r(\eta)$.
The following control of the bracketing entropy  can be found in \cite{CohenLepennec11}.
\begin{Lemma} \label{lem:EntropyGMM}
 For a fixed GMM  shape and for all $u \in (0,1)$,
$$ \mbox{H}_{[.]}(\frac u 9, \mathcal F_\eta  ,\cH) \leq  \mathcal I (\eta) + D(\eta)  \ln \frac 1 u $$
where $\mathcal I$ is a constant depending on the GMM shape and the bounding parameters. Moreover, for all $\xi>0$,
\begin{equation} \label{eq:Dudley}
\int_0^{\xi} \sqrt{\mathcal{H}_{[.]}(u,\mathcal F_\eta,\cH )}\,d u \leq  \Psi_\eta(\xi) :=  \xi \sqrt{D(\eta)}
\left\{\mathcal{A}+\sqrt{\ln\left(\frac{1}{1\wedge \xi }\right)}\right\} 
\end{equation}
where the constant $\mathcal{A}$ depends on the GMM shape and the bounding parameters.
\end{Lemma}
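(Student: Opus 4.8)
The plan is to establish Lemma~\ref{lem:EntropyGMM} by controlling the bracketing entropy of the parametric family $\mathcal F_\eta$ through a careful parametrization argument, and then to deduce the Dudley-integral bound~\eqref{eq:Dudley} by a direct computation. The core observation is that $\mathcal F_\eta$ is a family of densities indexed by the compact, bounded parameter set $\Theta_r(\eta)$, whose free dimension is $D(\eta)$. The standard strategy (following \cite{CohenLepennec11} and the line of work initiated by \cite{GhreGenovese:00,Ghosal:01}) is to relate a bracket of Hellinger size $u/9$ to a net of the parameters: if two parameter vectors $\theta, \theta'$ are close in a suitable metric (say Euclidean distance on the proportions, means, and a matrix distance on the covariances), then the corresponding densities $f_\theta, f_{\theta'}$ are close in Hellinger distance, with a Lipschitz-type modulus controlled by the bounding constants $\bar\mu, \underline\sigma, \bar\sigma, \underline L, \bar L$. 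This Lipschitz control exploits the boundedness of the determinants and of the spectra forced by $\Theta_r(\eta)$, which prevents the Gaussian components from degenerating.

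First I would make precise the comparison between Hellinger distance on densities and Euclidean/operator distance on the parameters. For fixed-shape Gaussian mixtures, one shows that $\cH(f_\theta, f_{\theta'}) \leq \Lambda \, \| \theta - \theta' \|$ for a constant $\Lambda$ depending only on the bounding parameters; the relevant estimates for individual Gaussian densities (comparison of means and of covariances with bounded, non-degenerate spectra) are classical and yield explicit moduli. From such a Lipschitz bound, an $\varepsilon$-net of $\Theta_r(\eta)$ in Euclidean distance of cardinality $(\text{const}/\varepsilon)^{D(\eta)}$ produces brackets of Hellinger size proportional to $\varepsilon$; taking logarithms gives the stated form $\mbox{H}_{[.]}(u/9, \mathcal F_\eta, \cH) \leq \mathcal I(\eta) + D(\eta) \ln(1/u)$, where $\mathcal I(\eta)$ absorbs the constant factors and the shape-dependent geometry of $\Theta_r(\eta)$. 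The passage from a covering net to genuine \emph{brackets} $(l_j, m_j)$ with $l_j \leq f \leq m_j$ is the one point requiring care: one builds the brackets by slightly inflating/deflating the center densities of the net, a construction standard for Gaussian families and detailed in \cite{CohenLepennec11}.

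Having the entropy bound, the Dudley-integral estimate~\eqref{eq:Dudley} follows by substitution and an elementary integral computation. Writing
\begin{equation*}
  \int_0^{\xi} \sqrt{\mbox{H}_{[.]}(u, \mathcal F_\eta, \cH)} \, du \leq \int_0^{\xi} \sqrt{\mathcal I(\eta) + D(\eta) \ln(1/u)} \, du,
\end{equation*}
I would bound the right-hand side by splitting the square root via $\sqrt{a+b} \leq \sqrt a + \sqrt b$ and using the standard estimate $\int_0^\xi \sqrt{\ln(1/u)} \, du \leq \xi \sqrt{\ln(1/\xi)} + (\text{const}) \xi$ valid for $\xi \leq 1$, together with the trivial bound $\int_0^\xi du = \xi$ for the constant part. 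Collecting terms yields an expression of the form $\xi \sqrt{D(\eta)}\{\mathcal A + \sqrt{\ln(1/(1\wedge \xi))}\}$, where $\mathcal A$ gathers $\sqrt{\mathcal I(\eta)/D(\eta)}$ and the numerical constants from the integral; the $1 \wedge \xi$ truncation handles the case $\xi \geq 1$ uniformly.

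The main obstacle is the bracketing construction itself, that is, upgrading a Euclidean covering net of $\Theta_r(\eta)$ to a valid Hellinger bracketing of $\mathcal F_\eta$ with the \emph{same} logarithmic dependence $D(\eta) \ln(1/u)$ and a controlled constant $\mathcal I(\eta)$. This requires the Lipschitz comparison between parameter distance and Hellinger distance to hold uniformly over $\Theta_r(\eta)$, which is precisely where the constraints defining $\Theta_r(\eta)$ (bounded means, non-degenerate bounded spectra, bounded determinants) are indispensable: without them the Lipschitz modulus blows up as components degenerate. Since this delicate step is carried out in full in \cite{CohenLepennec11} for exactly this class of constrained fixed-shape Gaussian mixtures, I would invoke their result for the entropy bound and only carry out the Dudley integration explicitly to obtain the stated $\Psi_\eta$.
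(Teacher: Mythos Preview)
Your proposal is correct and aligns with the paper's treatment: the paper does not prove this lemma at all but simply attributes the bracketing entropy bound to \cite{CohenLepennec11}, exactly as you do at the end. Your additional sketch of the Lipschitz-from-parameters-to-Hellinger argument and the explicit Dudley integration are standard and accurate supplements; just be aware that for $\mathcal A$ to depend only on the shape and bounding parameters (and not on $\eta$), you need $\mathcal I(\eta) \lesssim D(\eta)$ uniformly, which is indeed what the construction in \cite{CohenLepennec11} delivers.
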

We are now in position to finish the proof of Theorem~\ref{theo:BorneSupAvecMLE}. Remember that the sample $\bbX_2$ is used for computing the
maximum likelihood estimators. For a fixed shape, and a given $\eta \in \Ups$, let $ \xi_{n_2}$ satisfying (\ref{eq:WSHyp}) : $\Psi_\eta
(\xi_{n_2})=\sqrt{{n_2}} \,\xi_{n_2}^{2}.$ Note that $ \sqrt{\frac{D(\eta)}{n_2}} \, \mathcal{A} \leq \xi_{n_2} $, thus we have 
\begin{equation*}
 \xi_{n_2}^2 \leq   \frac{D(\eta)}{n_2}   \left\{ 2 \mathcal{A} ^2  +  2 \ln^+ \left( \frac {n_2} {\mathcal{A} ^2   D(\eta)} \right) 
\right\}.
 \end{equation*}
Finally, using Theorem~\ref{thm:main-oracle-inequality}, Lemma~\ref{lem:normsup} and  Inequalities (\ref{eq:boundHelling}) and
(\ref{eq:klHelling}), we find that
\begin{multline*}
    \E_{\bbX_1 \bbX_2} \cH^2(f^*,  \E_{\eta \sim \hat \pi_\lambda}    f_{\hat \theta(\eta)}) \leq 
    c_\lambda  \inf_{\substack{K \in \N^* \\ S   \subset \{ 1, \ldots, d \}}} \bigg\{  \frac{ \ln K! + 1 + 2 |S|  \ln ( e  d / |S|) )}{n_1} \\ 
   +      \lambda C \left[   \kl (f ^\star,
\mathcal F_\eta) +   \kappa    \left( \frac{D(\eta)}{n_2}   \left\{ \mathcal{A} ^2 +  \ln^+ \left( \frac {n_2} {\mathcal{A} ^2   D(\eta)} \right) 
\right\} + \frac 1 {n_2} \right) \right]         \bigg\}
\end{multline*}
where 
$C  =   \frac {8   }  {2 \ln 2  -   1}   \left( 1 + \ln ( \|f^\star\|_\infty   L ^ +)    +   \frac {2 {\bar \mu}^2} {\overline{\sigma} ^2 }  \right)$
and $\kappa = \kappa '  \frac {2 \ln 2  -   1 } {2   }. $
\section*{Acknowledgements}

The authors wish to thank , C. Meynet and C. Maugis for helpful discussions.

\bibliographystyle{plain}
\bibliography{biblio}
\end{document}